\newcommand{\A}{\mathcal{A}}
\newtheorem{theorem}{Theorem}
\newtheorem{lemma}[theorem]{Lemma}
\newtheorem{corollary}[theorem]{Corollary}
\newtheorem{definition}{Definition}
\newcommand{\algref}[1]{Alg.~\ref{#1}}
\renewcommand{\S}{\mathcal{S}}
\renewcommand{\E}[2]{\mathbb{E}_{#1}\left[#2\right]}
\newcommand{\interior}[1]{%
  {\kern0pt#1}^{\mathrm{o}}%
}
\newcommand{\pd}[1]{\ensuremath{\pi^\dagger(#1)}}
\title{The Sample Complexity of Teaching-by-Reinforcement on Q-Learning}
\author {
    Xuezhou Zhang,\textsuperscript{\rm 1}
    Shubham Kumar Bharti, \textsuperscript{\rm 1}
    Yuzhe Ma \textsuperscript{\rm 1} 
    Adish Singla \textsuperscript{\rm 2} 
    Xiaojin Zhu\textsuperscript{\rm 1} \\
}
\begin{document}
	
	\maketitle
	\begin{abstract}
		We study the sample complexity of teaching, termed as ``teaching dimension" (TDim) in the literature, for the \textit{teaching-by-reinforcement} paradigm, where the teacher guides the student through rewards. This is distinct from the \textit{teaching-by-demonstration} paradigm motivated by robotics applications, where the teacher teaches by providing demonstrations of state/action trajectories. The teaching-by-reinforcement paradigm applies to a wider range of real-world settings where a demonstration is inconvenient, but has not been studied systematically. 
		In this paper, we focus on a specific family of reinforcement learning algorithms, Q-learning, and characterize the TDim under different teachers with varying control power over the environment, and present matching optimal teaching algorithms.
		Our TDim results provide the minimum number of samples needed for reinforcement learning, and we discuss their connections to standard PAC-style RL sample complexity and teaching-by-demonstration sample complexity results.
		Our teaching algorithms have the potential to speed up RL agent learning in applications where a helpful teacher is available.
	\end{abstract}
	
	\section{Introduction}
	In recent years, reinforcement learning (RL) has seen applications in a wide variety of domains, such as games~\cite{silver2016mastering,mnih2015human}, robotics control~\cite{kober2013reinforcement,argall2009survey} and healthcare~\cite{komorowski2018artificial,shortreed2011informing}. 
	One of the fundamental questions in RL is to understand the sample complexity of learning, i.e. the amount of training needed for an agent to learn to perform a task. 
	In the most prevalent RL setting, an agent learns through continuous interaction with the environment and learns the optimal policy from natural reward signals. For standard algorithms such as Q-learning, naive interaction with MDP suffers exp complexity~\cite{li2012sample}.
	In contrast, many real-world RL scenarios involve a knowledgable (or even omniscient) teacher who aims at guiding the agent to learn the policy faster. 
	For example, in the educational domain, a human student can be modeled as an RL agent, and a teacher will design a minimal curriculum to convey knowledge (policy) to the student (agent) \cite{chuang2020using}. 
	
	In the context of reinforcement learning, teaching has traditionally been studied extensively under the scheme of \textit{teaching-by-demonstration (TbD)}, where the teacher provides demonstrations of state/action trajectories under a good policy, and the agent aims to mimic the teacher as closely as possible \cite{hussein2017imitation}.
	However, in many applications, it is inconvenient for the teacher to demonstrate because the action space of the teacher is distinct from the action space of the learner. In contrast, it is usually easier for the teacher to \textit{teach by reinforcements (TbR)}, i.e. with rewards and punishments. For example, in dog training, the trainer can't always demonstrate the task to be learned, e.g. fetch the ball with its mouth, but instead would let the dog know whether it performs well by giving treats strategically \cite{chuang2020using}; In personalizing virtual assistants, it's easier for the user to tell the assistant whether it has done a good job than to demonstrate how a task should be performed. Despite its many applications, TbR has not been studied systematically.
	
	In this paper, we close this gap by presenting to our knowledge the first results on TbR. Specifically, we focus on a family of RL algorithms called Q-learning. 
	Our main contributions are: 
	\begin{enumerate}[leftmargin=*, nolistsep]
		\item We formulate the optimal teaching problem in TbR.
		\item We characterize the sample complexity of teaching, termed as "teaching dimension" (TDim), for Q-learning under four different teachers, distinguid by their power (or rather constraints) in constructing a teaching sequence.  See Table~\ref{tab:summary} for a summary of results.
		\item For each teacher level, we design an efficient teaching algorithm which matches the TDim.
		\item We draw connections between our results and classic results on the sample complexity of RL and of TbD.
	\end{enumerate}
	\begin{table*}[ht]
		\caption{Our Main Results on Teaching Dimension of Q-Learning}
		\label{tab:summary}
		\centering
		\begin{tabularx}{1.8\columnwidth}{ c | c | c | c | c }
			\toprule
			\textbf{Teacher} & \textbf{Level 1} & \textbf{Level 2} & \textbf{Level 3}& \textbf{Level 4}\\
			\hline
			\textbf{Constraints} & none & respect agent's $a_t$ & $s_{t+1}: P(s_{t+1}|s_t,a_t)>0$& $s_{t+1}\sim P(\cdot|s_t,a_t)$\\
			\hline
			\textbf{TDim} & $S$ & $S(A-1)$ & $ O\left(SAH\left(\frac{1}{1-\epsilon}\right)^D\right)$ & $O\left(SAH\left(\frac{1}{(1-\epsilon)p_{\min}}\right)^D\right)$\\
			\bottomrule
		\end{tabularx}
	\end{table*}

	\section{Related Work}
	\label{sec:related}
	\paragraph{Classic Machine Teaching}
	Since computational teaching was first proposed in~\cite{shinohara1991teachability,goldman1992complexity}, the teaching dimension has been studied in various learning settings. The vast majority focused on batch supervised learning. See \cite{DBLP:journals/corr/ZhuSingla18} for a recent survey. 
	Of particular interest to us though is teaching online learners such as Online Gradient Descent (OGD)~\cite{liu2017iterative,lessard2018optimal}, active learners~\cite{hanneke2007teaching,peltola2019machine}, and sequential teaching for learners with internal learning state~\cite{hunziker2019teaching,mansouri2019preference,DBLP:conf/nips/ChenSAPY18}. In contrast to OGD where the model update is fully determined given the teacher's data, the RL setting differs in that the teacher may not have full control over the agent's behavior (e.g. action selection) and the environment's evolution (e.g. state transition), making efficient teaching more challenging. Several recent work also study data poisoning attacks against sequential learners~\cite{zhang2019online, ma2019policy, jun2018adversarial,zhang2020adaptive, rakhsha2020policy, ma2018data,wang2018data}. The goal of data poisoning is to force the agent into learning some attacker-specified target policy, which is mathematically similar to teaching. 
	
	\paragraph{Teaching by Demonstration}
	Several recent works studied teaching by demonstrations, particularly focusing on inverse reinforcement learning agents (IRL)~\cite{DBLP:conf/nips/TschiatschekGHD19,DBLP:conf/ijcai/KamalarubanDCS19,brown2019machine,haug2018teaching,cakmak2012algorithmic, walsh2012dynamic}. IRL is a sub-field of RL where the learners aim at recovering the reward function from a set of teacher demonstrations to infer a near-optimal policy. Teaching in IRL boils down to designing the most informative demonstrations to convey a target reward function to the agent. Their main difference to our work lies in the teaching paradigm. IRL belongs to TbD where the teacher can directly demonstrate the desired action in each state. The problem of exploration virtually disappears, because the optimal policy will naturally visit all important states. On the other hand, as we will see next, in the TbR paradigm, the teacher must strategically design the reward signal to \textit{navigate} the learner to each state before it can be taught. In other words, the challenge of exploration remains in reinforcement-based teaching, making it much more challenging than demonstration-based teaching. It is worth mentioning that the NP-hardness in finding the optimal teaching strategy, similar to what we establish in this paper (see Appendix), has also been found under the TbD paradigm \cite{walsh2012dynamic}.

	\paragraph{Empirical Study of Teaching-by-Reinforcement} Empirically, teaching in RL has been studied in various settings, such as reward shaping ~\cite{ng1999policy}, where teacher speeds up learning by designing the reward function, and action advising~\cite{torrey2013teaching,DBLP:conf/ijcai/AmirKKG16}, where the teacher can suggest better actions to the learner during interaction with the environment. Little theoretical understanding is available in how much these frameworks accelerate learning. As we will see later, our teaching framework generalizes both approaches, by defining various levels of teacher's control power, and we provide order-optimal teaching strategies for each setting.

	\section{Problem Definitions}
	\label{sec:problem}
	The machine teaching problem in RL is defined on a system with three entities: 
	the underlying MDP environment,
	the RL agent (student), 
	and the teacher. The teaching process is defined in~\algref{alg:protocol}.
	Whenever the boldface word ``\textbf{may}'' appears in the protocol, it depends on the level of the teacher and will be discussed later. In this paper, we assume that there is a clear separation between a training phase and a test phase, similar to the best policy identification (BPI) framework \cite{fiechter1994efficient} in classic RL. In the training phase, the agent interacts with the MDP for a finite number of episodes and outputs a policy in the end. In the test phase, the output policy is fixed and evaluated. In our teaching framework, the teacher can decide when the training phase terminates, and so teaching is regarded as completed as soon as the target policy is learned. Specifically, in the case of Q-learning, we do not require that the estimated Q function converges to the true Q function w.r.t. the deployed policy, which is similarly not required in the BPI or PAC-RL frameworks, but only require that the deployed policy matches the target policy exactly.
	\begin{algorithm}[ht]
		\caption{Machine Teaching Protocol on Q-learning }\label{alg:protocol}
		\begin{flushleft}
			\textbf{Entities:} MDP environment, learning agent with initial Q-table $Q_0$, teacher with target policy $\pi^\dagger$.\\
		\end{flushleft}
		\begin{algorithmic}[1]
			\WHILE{$\pi_t\neq \pi^\dagger$}
			\STATE MDP draws $s_0 \sim \mu_0$ after each episode reset.  But the teacher \textbf{may} override $s_0$.
			\FOR{$t = 0, \ldots H-1$}
			\STATE The agent picks an action $a_{t} = \pi_{t}(s_{t})$ with its current behavior policy $\pi_{t}$.  But the teacher \textbf{may} override $a_t$ with a teacher-chosen action.  
			\STATE The MDP evolves from $(s_t, a_t)$ to produce immediate reward $r_{t}$ and the next state $s_{t+1}$. But the teacher \textbf{may} override $r_{t}$ or move the system to a different next state $s_{t+1}$.
			\STATE The agent updates $Q_{t+1} = f(Q_t,e_t)$ from experience $e_t = (s_t, a_t, r_t, s_{t+1})$.
			\ENDFOR
			\ENDWHILE
			\STATE Once the agent learns $\pi^\dagger$, the teacher ends the teaching phase, and the learned policy is fixed and deployed.
		\end{algorithmic} 
	\end{algorithm}
	\paragraph{Environment $\mathbf{M}$:} We assume that the environment is an episodic Markov Decision Process (MDP) parameterized by $M = (\mathcal S, \mathcal A, R, P, \mu_0,H)$ where $\S$ is the state space of size $S$, $\mathcal A$ is the action space of size $A$, $R: \mathcal S\times \mathcal A \rightarrow \R$ is the reward function, $P: \mathcal S\times \mathcal A \times \mathcal S \rightarrow \R$ is the transition probability, $\mu_0: \mathcal S\rightarrow \R$ is the initial state distribution, and $H$ is the episode length. Next, we define two quantities of interest of an MDP that we will use in our analysis.
	\begin{definition}
		Let the \textbf{minimum transition probability} $p_{\min}$ of an MDP be defined as
		$
		p_{\min} = \min_{s,s'\in \mathcal S, a\in\mathcal A, P(s'|s,a)>0} P(s'|s,a).
		$
	\end{definition}
	
	\begin{definition}
		Let the \textbf{diameter} $D$ of an MDP be defined as the minimum path length to reach the hardest-to-get-to state in the underlying directed transition graph of the MDP. Specifically,
		\begin{eqnarray}
		D = \max_{s\in S} &&\min_{T,(s_0,a_0,s_1,a_1,...,s_T=s)} T \\
		\mbox{s.t. }&&\mu_0(s_0)>0,
		P(s_{t+1} | s_t,a_t)>0, \forall t\nonumber
		\end{eqnarray}
	\end{definition}
	
	\paragraph{RL agent $L$:} We focus on a family of Q-learning agents $L \in \mathcal L$ with the following properties:
	\begin{enumerate}[leftmargin=*, nolistsep]
		\item \textbf{Behavior policy}: The agent behaves according to the  $\epsilon$-greedy policy for some $\epsilon\in [0,1]$, i.e.
	\begin{equation*}
		\label{eq:behavior}
		\pi_t(s) := \left\{
		\begin{array}{ll}
			a^*\defeq \argmax_a  Q_t(s, a) & \mbox{ w.p. } 1-\epsilon\\
			\mbox{Unif}(\mathcal A\backslash a^*), & \mbox{ w.p. } \epsilon.
		\end{array}
		\right.
	\end{equation*}
		Note this definition is slightly different but equivalent to standard $\epsilon$-greedy exploration, where we merged the probability of choosing $\argmax_a Q_t(s,a)$ in the second branch into the first. This simplifies our notation later.
		\item \textbf{Learning Update}: Given experience $e_t = (s_t,a_t,r_t,s_{t+1})$ at time step $t$, the learning update $Q_{t+1} = f(Q_t,e_t)$ only modifies the $(s_t,a_t)$ entry of the Q-table. Furthermore, the Q-table is ``controllable'': for any $s_t,a_t,s_{t+1}$, there exists a reward $r$ such that the ranking of $a_t$ within $Q_{t+1}(s_t, \cdot)$ can be made first, last or unchanged, respectively.
	\end{enumerate}
	This family includes common Q-learning algorithms such as the standard $\epsilon$-greedy Q-learning, as well as provably efficent variants like UCB-H and UCB-B~\cite{jin2018q}.
	
	\paragraph{Teacher:} In this paper, we study four levels of teachers from the strongest to the weakest:
	\begin{enumerate}[leftmargin=*, nolistsep]
		\item \textbf{Level 1}: The teacher can generate arbitrary transitions $(s_t,r_t,s_{t+1})\in \S \times \R \times \S$, and override the agent chosen action $a_t$.
		None of these needs to obey the MDP (specifically $\mu_0, R, P$).
		\item \textbf{Level 2}: The teacher can still generate arbitrary current state $s_t$, reward $r_t$ and next state $s_{t+1}$, but cannot override the agent's action $a_t$. The agent has ``free will'' in choosing its action.
		\item \textbf{Level 3}: The teacher can still generate arbitrary reward $r_t$ but can only generate MDP-supported initial state and next state, i.e. $\mu_0(s_0)>0$, and $P(s_{t+1}|s_t,a_t)>0$. However, it does not matter what the actual nonzero MDP probabilities are. 
		\item \textbf{Level 4}: The teacher can still generate arbitrary reward $r_t$ but the initial state and next state must be sampled from the MDPs dynamics, i.e. $s_0\sim\mu_0$ and $s_{t+1} \sim P(\cdot|s_t,a_t)$.
	\end{enumerate}
	In all levels, the teacher observes the current Q-table $Q_t$ and knows the learning algorithm $Q_{t+1} = f(Q_t, e_t)$. 
	
	In this work, we are interested in analyzing the \textbf{teaching dimension}, a quantity of interest in the learning theory literature. 
	We define an RL teaching problem instance by the MDP environment $M$, the student $L$ with initial Q-table $Q_0$, and the teacher's target policy $\pi^\dagger$.  
	We remark that the target policy $\pi^\dagger$ need not coincide with the optimal policy $\pi^*$ for $M$.
	In any case, the teacher wants to control the experience sequence so that the student arrives at $\pi^\dagger$ quickly.
	Specifically,
	\begin{definition}
		Given an RL teaching problem instance $(M, L, Q_0, \pi^\dagger)$, the \textbf{minimum expected teaching length} is
		$
		\mathrm{METaL}(M, L, Q_0, \pi^\dagger) = \min_{T,(s_t,a_t,r_t,s_{t+1})_{0:T-1}} \E{}{T}
		\mbox{, s.t. } \pi_T = \pi^\dagger,
		$
		where the expectation is taken over the randomness in the MDP (transition dynamics) and the learner (stochastic behavior policy).
	\end{definition}
	METal depends on nuisance parameters of the RL teaching problem instance.  For example, if $Q_0$ is an initial Q-table that already induces the target policy $\pi^\dagger$, then trivially METal=0.  
	Following the classic definition of teaching dimension for supervised learning, we define TDim by the hardest problem instance in an appropriate family of RL teaching problems:
	
	\begin{definition}
		The \textbf{teaching dimension} of an RL learner $L$ w.r.t. a family of MDPs $\mathcal{M}$ is defined as the worst-case METal:
		$
		TDim = \max_{\pi^\dagger\in \{\pi:\S\rightarrow\A\},Q_0 \in \R^{S\times A}, M\in \mathcal{M}} \mathrm{METaL}(M, L, \pi^\dagger).
		$
	\end{definition}
	
	\section{Teaching without MDP Constraints}
	We start our discussion with the strongest teachers. These teachers have the power of producing arbitrary state transition experiences that do not need to obey the transition dynamics of the underlying MDP. While the assumption on the teaching power may be unrealistic in some cases, the analysis that we present here provides theoretical insights that will facilitate our analysis of the more realistic/less powerful teaching settings in the next section.
	\subsection{Level 1 Teacher}
	The level 1 teacher is the most powerful teacher we consider. In this setting, the teacher can generate arbitrary experience $e_t$. The learner effectively becomes a ``puppet'' learner - one who passively accepts any experiences handed down by the teacher.

	\begin{theorem}\label{thm: lvl 1}
		For a Level 1 Teacher, any learner $L\in \mathcal L$, and an MDP family $\mathcal M$ with $|\mathcal S| = S$ and a finite action space, the teaching dimension is $TDim = S$.
	\end{theorem}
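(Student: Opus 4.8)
The plan is to prove the two inequalities $TDim \le S$ and $TDim \ge S$; the first must hold for \emph{every} instance $(M,L,Q_0,\pi^\dagger)$, whereas the second only requires a single worst-case instance.

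For the upper bound, fix an arbitrary instance and enumerate the states as $s^1,\dots,s^S$. I would let the Level-1 teacher spend at most one time step per state: at step $t = 0,\dots,S-1$ it overrides the current state to $s^{t+1}$ and the agent's action to $\pi^\dagger(s^{t+1})$, then invokes the ``controllable'' property of $L\in\mathcal L$ to choose a reward $r_t$ (and an arbitrary next state) so that after the update $Q_{t+1}=f(Q_t,e_t)$ the action $\pi^\dagger(s^{t+1})$ is ranked first in the row $Q_{t+1}(s^{t+1},\cdot)$, i.e.\ becomes the greedy action there. The ``only the $(s_t,a_t)$ entry is modified'' property ensures that no later update disturbs a row that has already been set, so after processing all $S$ states the greedy policy induced by the current Q-table agrees with $\pi^\dagger$ on all of $\S$, hence $\pi_t=\pi^\dagger$. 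Since a Level-1 teacher may also override $s_0$ at every episode reset, this schedule extends verbatim across episodes when $H<S$, and since the teacher dictates every experience the sequence is deterministic, so its length equals the number of states actually touched, which is at most $S$. Thus $\mathrm{METaL}(M,L,Q_0,\pi^\dagger)\le S$ for all instances, giving $TDim\le S$.

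For the lower bound, take any $M\in\mathcal M$ with $A\ge 2$ (the case $A=1$ being degenerate with $TDim=0$), fix $\pi^\dagger$ arbitrarily, and choose $Q_0$ so that in every row the \emph{unique} maximizer is some non-target action $a_0(s)\ne\pi^\dagger(s)$, e.g.\ by putting one non-target entry per row strictly above the rest. Then $\pi_0$ disagrees with $\pi^\dagger$ on all $S$ states. For any Level-1 teaching sequence that reaches $\pi_T=\pi^\dagger$, each row $Q_T(s,\cdot)$ must differ from $Q_0(s,\cdot)$, since otherwise the greedy action in that row would still be $a_0(s)\ne\pi^\dagger(s)$; but one learning update changes only the row of its own current state, so each of the $S$ rows needs a dedicated time step, forcing $T\ge S$ on every realization. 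Hence $\mathrm{METaL}\ge S$ for this instance and $TDim\ge S$. (The same argument in fact lower-bounds every weaker teacher level too.)

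I do not expect a real obstacle. The two points needing care are (i) using the structural properties of $\mathcal L$ exactly as stated --- in particular interpreting ``ranked first'' as making $\pi^\dagger(s_t)$ the greedy, i.e.\ deployed, action so that $\pi_t=\pi^\dagger$ can be certified --- and (ii) the bookkeeping of spreading the $S$ override steps over $\lceil S/H\rceil$ episodes and allowing the teacher to stop mid-episode, which is immediate since a Level-1 teacher controls $s_0$ at each reset. A minor subtlety worth stating explicitly is the implicit assumption $A\ge 2$ in the lower bound.
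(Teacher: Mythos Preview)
Your proposal is correct and follows essentially the same approach as the paper: both the upper bound (override state and action, use controllability of $r_t$ to promote $\pi^\dagger(s)$ to the top, rely on the single-entry update property to preserve previously taught rows) and the lower bound (pick $Q_0$ with $\argmax_a Q_0(s,a)\neq\pi^\dagger(s)$ for every $s$, so each of the $S$ rows must be touched at least once) match the paper's argument almost line for line. Your extra remarks on handling $H<S$ via episode resets and the $A\ge 2$ caveat are sound refinements that the paper leaves implicit.
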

	
	It is useful to illustrate the theorem with the standard $Q$-learning algorithm, which is a member of $\mathcal L$.
	The worst case happens when $\argmax_a Q_0(s, a) \neq \pi^\dagger(s), \forall s$.
	The teacher can simply choose one un-taught $s$ at each step, and construct the experience $(s_t=s, a_t = \pi^\dagger(s), r_t, s_{t+1}=s')$ where $s'$ is another un-taught state (the end case is handled in the algorithm in appendix). 
	Importantly, the teacher chooses 
	$r_t \in \left\{ \frac{\max Q_t(s_t, \cdot) + \theta - (1-\alpha) Q_t(s_t, a_t)}{\alpha} - {\gamma \max Q_t(s', \cdot)} : \theta > 0\right\}$,
	knowing that the standard $Q$-learning update rule $f$ is 
	$Q_{t+1}(s_t, a_t) = (1-\alpha) Q_t(s_t, a_t) + \alpha (r_t + \gamma \max_{a \in A} Q_t(s', a))$.
	This ensures that 
	$Q_{t+1}(s, \pi^\dagger(s)) = \max_{a \neq \pi^\dagger(s)} Q_0(s, a) + \theta > \max_{a \neq \pi^\dagger(s)} Q_0(s, a)$,
	and thus the target policy is realized at state $s$.
	Subsequent teaching steps will not change the action ranking at state $s$. 
	The same teaching principle applies to other learners in $\mathcal L$.

	\subsection{Level 2 Teacher}
	At level 2 the teacher can still generate arbitrary reward $r_t$ and next state $s_{t+1}$, but now it cannot override the action $a_t$ chosen by the learner. 
	This immediately implies that the teacher can no longer teach the desired action $\pi^\dagger(s)$ in a single visit to $s$: for example, $Q_0$ may be such that $Q_0(s, \pi^\dagger(s))$ is ranked last among all actions. 
	If the learner is always greedy with $\epsilon=0$ in~\eqref{eq:behavior}, the teacher will need to visit $s$ for $(A-1)$ times, each time generating a punishing $r_t$ to convince the learner that the top non-target action is worse than $\pi^\dagger(s)$.
	However, for a learner who randomly explores with $\epsilon>0$ it may perform $\pi^\dagger(s)$ just by chance, and the teacher can immediately generate an overwhelmingly large reward to promote this target action to complete teaching at $s$; it is also possible that the learner performs a non-target action that has already been demoted and thus wasting the step.
	Despite the randomness, interestingly our next lemma shows that for any $\epsilon$ it still takes in expectation $A-1$ visits to a state $s$ to teach a desired action in the worst case.
	\begin{lemma}\label{thm:expected}
		For a Level 2 Teacher, any learner in $ \mathcal L$, and an MDP family $\mathcal M$ with action space size $A$, it takes at most $A-1$ visits in expectation to a state $s$ to teach the desired action $\pi^\dagger(s)$ on $s$.
	\end{lemma}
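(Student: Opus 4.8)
The plan is to exhibit a concrete teacher strategy, show it finishes in at most $A-1$ visits to $s$ in expectation regardless of $\epsilon$, and observe that this is tight. The strategy: whenever the learner visits $s$ and picks $a_t$, if $a_t=\pi^\dagger(s)$ the teacher invokes the controllability property of $f$ to make $a_t$ strictly first in $Q_{t+1}(s,\cdot)$ (teaching at $s$ is then done); if $a_t\neq\pi^\dagger(s)$ the teacher makes $a_t$ strictly last in $Q_{t+1}(s,\cdot)$, i.e. strictly below every other action and in particular strictly below $\pi^\dagger(s)$. Crucially the entry $Q_t(s,\pi^\dagger(s))$ is never modified until the final promotion, so it stays constant and any non-target action the learner ever selects is pushed permanently below it.

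First I would define the potential $B_t=\{a\neq\pi^\dagger(s): Q_t(s,a)\geq Q_t(s,\pi^\dagger(s))\}$, with $|B_0|\leq A-1$, and record the structural facts that make the analysis go through: (a) teaching at $s$ is complete once $B_t=\emptyset$ (then $\pi^\dagger(s)$ is the unique maximizer) or once the learner picks $\pi^\dagger(s)$ directly; (b) while $B_t\neq\emptyset$, every maximizer of $Q_t(s,\cdot)$ is either $\pi^\dagger(s)$ or an element of $B_t$ — a non-target outside $B_t$ is strictly below the target, hence below the current max — so in the worst case, with ties broken against the target, the learner's greedy action lies in $B_t$; (c) if the learner picks an element of $B_t$ then $|B_{t+1}|=|B_t|-1$, if it picks $\pi^\dagger(s)$ teaching ends, and if it picks a non-target outside $B_t$ the visit is wasted with $|B_{t+1}|=|B_t|$.

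Next I would turn (b)--(c) into a recursion for $T_j$, the worst-case expected number of visits starting from $|B_t|=j$. Under $\epsilon$-greedy behavior, from state $j\ge 1$: with probability $1-\epsilon$ the greedy action (an element of $B_t$) is taken and $j$ decreases by one; with probability $\epsilon$ the learner draws uniformly from the other $A-1$ actions, of which one is $\pi^\dagger(s)$ (finishing), $j-1$ are other elements of $B_t$ (decreasing $j$), and $A-1-j$ waste the visit. Collecting terms and writing $x_j=(1-\epsilon)(A-1)+\epsilon j$, the self-loop from wasted visits can be eliminated to give the clean recursion $x_j T_j=(A-1)+x_{j-1}T_{j-1}$ with $T_0=0$, which telescopes to $T_j=j(A-1)/x_j$. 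Since $j(1-\epsilon)\le (A-1)(1-\epsilon)$ whenever $j\le A-1$, this yields $T_j\le A-1$ with equality at $j=A-1$; because $|B_0|\le A-1$ the lemma follows, and we also see the worst-case bound is independent of $\epsilon$.

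The main obstacle is establishing invariant (b) cleanly: I need to argue carefully that the learner's greedy action can change from visit to visit but is always a "useful" element of $B_t$ until teaching ends, that ties in $Q_t(s,\cdot)$ are resolved adversarially, and that "demote to strictly last" really does remove the chosen action from $B_t$ for good (this uses that the target entry is untouched). Once that invariant is pinned down, the remainder is the routine telescoping above. I would close by noting the matching lower bound is immediate from the $j=A-1$ equality --- or, concretely, from $Q_0$ in which $\pi^\dagger(s)$ is ranked strictly last and $\epsilon=0$, where the teacher is forced to demote all $A-1$ non-target actions one at a time.
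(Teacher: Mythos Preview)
Your proposal is correct and follows essentially the same approach as the paper: the same promote/demote teacher strategy, the same state variable (number of non-target actions at or above the target in $Q_t(s,\cdot)$), and the same recursion, which you write as $x_jT_j=(A-1)+x_{j-1}T_{j-1}$ with $x_j=(1-\epsilon)(A-1)+\epsilon j$ and the paper writes as $B(n)=1+B(n-1)$ after the substitution $B(n)=\big(1-\tfrac{A-1-n}{A-1}\epsilon\big)T(n)$; both solve to the identical closed form $T_j=j(A-1)/x_j\le A-1$. The only cosmetic difference is that you treat ties adversarially via $B_t$ while the paper simply assumes strict ordering, but this does not change the analysis.
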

	\textit{Proof Sketch}:
	Let us consider teaching the target action $\pi^\dagger(s)$ for a particular state $s$.  Consider a general case where there are $A-c$ actions above $\pi^\dagger(s)$ in the current ordering $Q_t(s,\cdot)$.
	In the worst case $c=1$.
	We define the function $T(x)$ as the expected number of visits to $s$ to teach the target action $\pi^\dagger(s)$ to the learner when there are $x$ higher-ranked actions. 
	For any learner in $\mathcal L$, the teacher can always provide a suitable reward to either move the action selected by the learner to the top of the ordering or the bottom. 
	Using dynamic programming we can recursively express $T(A-c)$ as 
	\begin{align*}
		T(A-c) &= 1 + (c-1)\frac{\epsilon}{A-1}T(A-c)+\\
		&(1-\epsilon +  (A-c-1)\frac{\epsilon}{A-1})T({A-c-1}).
	\end{align*}
	Solving it gives
	$
	T(A-c) = \frac{A-c}{(1-(c-1)\frac{\epsilon}{A-1})}$, which implies
	$\max_c T(A-c) = T(A-1) = A-1.
	$\qed
	Lemma \ref{thm:expected} suggests that the agent now needs to visit each state at most $(A-1)$ times to learn the target action, and thus teaching the target action on all states needs at most $S(A-1)$ steps:
	\begin{theorem}\label{thm: lvl 2}
		For a Level 2 Teacher, any learner in $ \mathcal L$, and an MDP family $\mathcal M$ with state space size $S$ and action space size $A$, the teaching dimension is $TDim = S(A-1)$.
	\end{theorem}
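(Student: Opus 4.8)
The plan is to prove the two matching inequalities $TDim\le S(A-1)$ and $TDim\ge S(A-1)$, both built on \lemref{thm:expected}.

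\emph{Upper bound.} I would exhibit one Level 2 strategy that works on every instance $(M,L,Q_0,\pi^\dagger)$ in expected time at most $S(A-1)$. Fix an arbitrary order $s^{(1)},\dots,s^{(S)}$ of the states and teach them one at a time. While working on $s^{(i)}$ the teacher uses its Level 2 power to keep the current state equal to $s^{(i)}$ on every step — overriding $s_0$ after each episode reset and choosing $s_{t+1}=s^{(i)}$ within an episode — and moves on only once $\pi_t(s^{(i)})=\pi^\dagger(s^{(i)})$. Since a learning update of any $L\in\mathcal L$ modifies only the visited $(s_t,a_t)$ entry, working on $s^{(i)}$ changes no row of the $Q$-table other than $s^{(i)}$'s; hence a state already taught stays taught and a state not yet processed keeps its original configuration, so when the teacher reaches $s^{(i)}$ it faces a fresh single-state problem. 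By \lemref{thm:expected} this problem is solved in at most $A-1$ visits in expectation, and a visit is one step. Writing $N_i$ for the number of steps spent on $s^{(i)}$, linearity of expectation gives $\E{}{\sum_{i=1}^S N_i}=\sum_{i=1}^S\E{}{N_i}\le S(A-1)$, so $\mathrm{METaL}\le S(A-1)$ for every instance and therefore $TDim\le S(A-1)$.

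\emph{Lower bound.} I would construct a single worst-case instance on which no Level 2 teacher beats $S(A-1)$. Take any $M\in\mathcal M$ (its transitions and its reward function play no role, since a Level 2 teacher overrides rewards and freely sets states), pick any deterministic $\pi^\dagger$, and choose $Q_0$ so that $Q_0(s,\pi^\dagger(s))<Q_0(s,a)$ for all $a\ne\pi^\dagger(s)$ and all $s$ — i.e. at every state the target is ranked last, the $c=1$ case in \lemref{thm:expected}. Fix any teacher strategy that almost surely drives the learner to $\pi^\dagger$, and let $N_s$ be the number of steps with $s_t=s$, so the total length is $T=\sum_{s\in\mathcal S}N_s$. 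The crux is $\E{}{N_s}\ge A-1$ for each $s$. Because updates touch only the visited entry, the row $Q_t(s,\cdot)$ — hence the learner's action distribution at $s$ — changes only during visits to $s$; thus the subsequence of steps with $s_t=s$ is precisely one play of the single-state teaching game of \lemref{thm:expected}, which must end with $\pi^\dagger(s)$ on top. The quantity $T(A-1)=A-1$ from the proof of that lemma is the \emph{minimum} expected number of such visits over all teacher replies (the DP reply — promote the learner's action when it equals $\pi^\dagger(s)$, otherwise demote it to last — is optimal), so any global strategy incurs at least $A-1$ expected visits at $s$. Summing over states, $\E{}{T}=\sum_s\E{}{N_s}\ge S(A-1)$, hence $\mathrm{METaL}\ge S(A-1)$ and $TDim\ge S(A-1)$.

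\emph{Main obstacle.} The delicate point is the lower-bound inequality $\E{}{N_s}\ge A-1$. The recursion in \lemref{thm:expected} is written for the teacher that already plays the promote/demote reply, so I must verify that this reply is genuinely optimal for the single-state game — demoting a target-dominating action to last is the only move that reduces the number of actions above $\pi^\dagger(s)$, promoting $\pi^\dagger(s)$ is the only move that finishes, and a step is unavoidably wasted when the learner's $\epsilon$-exploration lands on an already-demoted action — and then invoke the fact that this game is a finite MDP so the value of the promote/demote strategy equals the value of the optimal strategy. I also need the controllability clause in the definition of $\mathcal L$ to guarantee the ``demote to last'' move is realizable by some reward, plus a one-line observation that re-teaching an already-taught state can only inflate $N_s$, so the bound survives adaptive, interleaved teaching rather than only the state-by-state schedule.
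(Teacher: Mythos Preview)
Your proposal is correct and follows essentially the same route as the paper: an upper bound via a state-by-state Level~2 teaching schedule that invokes \lemref{thm:expected} per state, and a lower bound via a worst-case $Q_0$ placing $\pi^\dagger(s)$ at the bottom in every state so that each state needs $A-1$ expected visits. Your treatment is in fact more careful than the paper's own proof, which simply asserts that ``the teacher needs to make the agent visit each state $A-1$ times in expectation'' without separately arguing that the promote/demote reply is optimal among the rank-first/rank-last/unchanged moves available in~$\mathcal L$; your identification of this as the main obstacle and your sketch for resolving it are both appropriate.
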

	We present a concrete level-2 teaching algorithm in the appendix.
	For both Level 1 and Level 2 teachers, we can calculate the exact teaching dimension due to a lack of constraints from the MDP. 
	The next levels are more challenging, and we will be content with big O notation.
	\begin{figure*}[ht]
		\begin{center}
			\includegraphics[width=0.6\textwidth]{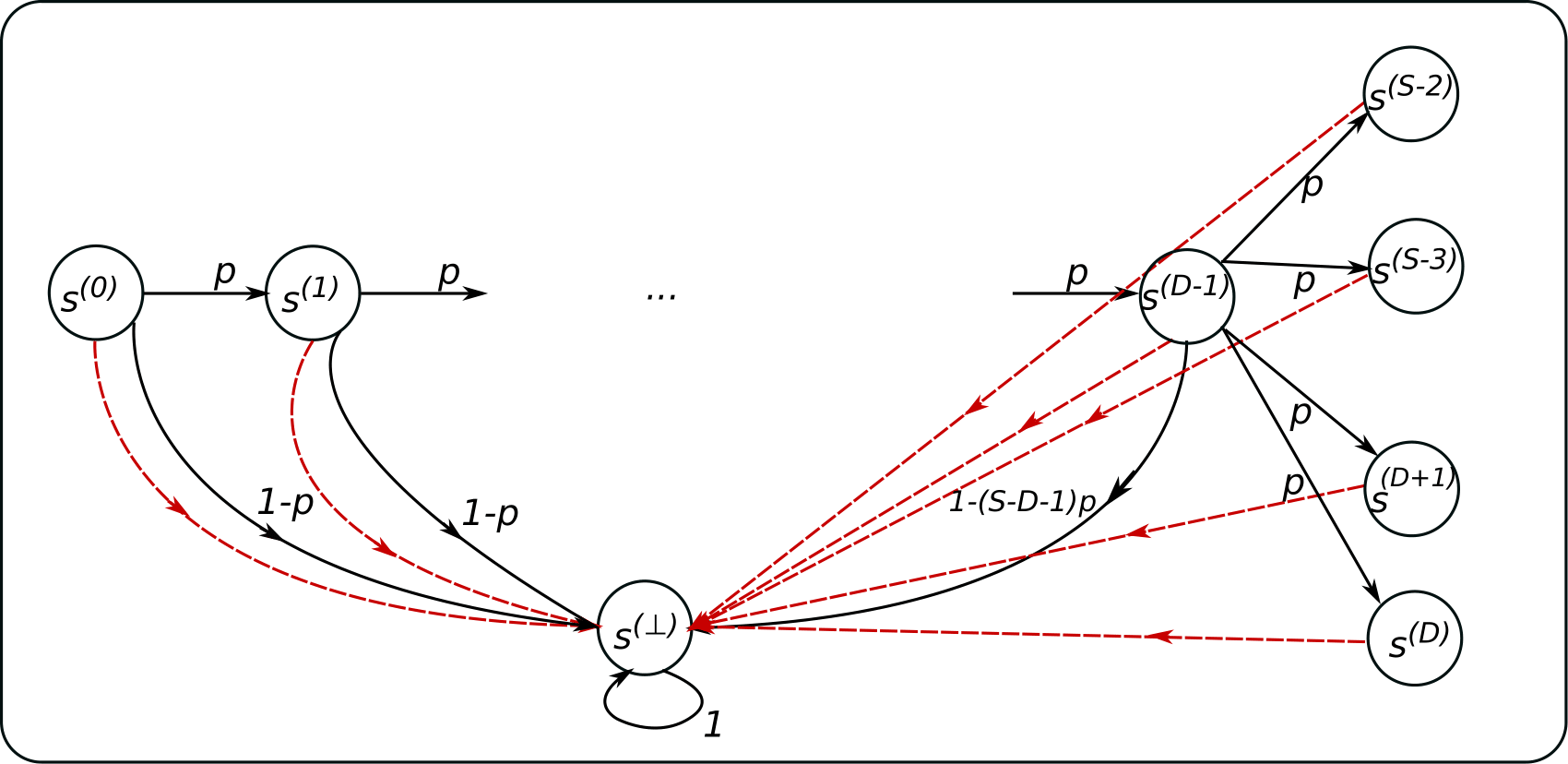}
			\caption{The ``peacock" MDP}
			\label{fig:peacock}
		\end{center}
	\end{figure*}
	\section{Teaching subject to MDP Constraints}
	In this section, we study the TDim of RL under the more realistic setting where the teacher must obey some notion of MDP transitions. In practice, such constraints may be unavoidable. For example, if the transition dynamics represent physical rules in the real world, the teacher may be physically unable to generate arbitrary $s_{t+1}$ given $s_t, a_t$ (e.g. cannot teleport).
	
	\subsection{Level 3 Teacher}
	In Level 3, the teacher can only generate a state transition to $s_{t+1}$ which is in the support of the appropriate MDP transition probability, i.e. $s_{t+1}\in \{s: P(s \mid s_t,a_t)>0\}$. 
	However, the teacher can freely choose $s_{t+1}$ within this set regardless of how small $P(s_{t+1} \mid s_t,a_t)$ is, as long as it is nonzero.
	Different from the previous result for Level 1 and Level 2 teacher, in this case, we are no longer able to compute the exact TDim of RL. Instead, we provide matching lower and upper-bounds on TDim.
	\begin{theorem}\label{thm:l3_lower}
		For Level 3 Teacher, any learner in $\mathcal L$ with $\epsilon$ probability of choosing non-greedy actions at random, an MDP family $\mathcal M$ with episode length $H$ and diameter $D\leq H$, the teaching dimension is lower-bounded by
		\begin{equation}
		TDim \geq \Omega\left((S-D)AH\left(\frac{1}{1-\epsilon}\right)^D\right).
		\end{equation}
	\end{theorem}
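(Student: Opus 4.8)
\textit{Proof proposal.} The plan is to exhibit a single worst-case teaching instance --- a ``peacock''-shaped MDP (\figref{fig:peacock}) that works uniformly for every $L\in\mathcal L$ --- and to lower bound its $\mathrm{METaL}$. Concretely, I would take the states to be a \emph{neck} (a chain $c_1,\dots,c_D$, with $\mu_0$ supported only on $c_1$), a \emph{tail} of $m:=S-D-1$ ``feather'' states $u_1,\dots,u_m$, and one absorbing dead state $\bot$. Transitions are rigged so that at each neck state a single ``progress'' action $a^\star$ has $P(c_{j+1}\mid c_j,a^\star)>0$ (and from $c_D$ the support of $a^\star$ is the whole tail, $P(u_i\mid c_D,a^\star)>0$ for all $i$), while every other action --- and every action at a feather or at $\bot$ --- leads deterministically to $\bot$; hence the deepest states are precisely the feathers, at depth $D$, so the diameter is $D$. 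I set $\pi^\dagger(c_j)=a^\star$ with $Q_0$ already greedy at every neck state, and at each feather I place $\pi^\dagger(u_i)$ \emph{strictly last} in $Q_0(u_i,\cdot)$, so teaching is complete exactly when $\pi^\dagger$ has become greedy at all $m$ feathers.

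The argument then rests on two bottlenecks. First, \emph{navigation}: since the update touches only the $(s_t,a_t)$ entry and the teacher can keep $a^\star$ greedy at every $c_j$, the $\epsilon$-greedy learner plays $a^\star$ at $c_j$ with probability at most $1-\epsilon$; because any deviation, and any feather visit, funnels the episode into the absorbing $\bot$, an episode reaches \emph{some} feather with probability at most $(1-\epsilon)^D$ and, when it does, touches exactly one feather. Writing $N$ for the number of episodes and $V$ for the total number of feather visits over the whole run, I would turn this into an optional-stopping / supermartingale statement $\mathbb{E}[V]\le(1-\epsilon)^D\,\mathbb{E}[N]$, conditioning on the full history since the teacher is adaptive. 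Second, \emph{per-feather cost}: a visit to $u_i$ yields one experience that can edit only $Q(u_i,\cdot)$, moving the played action to top, bottom, or unchanged; starting from $\pi^\dagger(u_i)$ ranked last, this is exactly the subproblem whose dynamic-programming value in \lemref{thm:expected} is $T(A-1)=A-1$, which I would argue is a \emph{lower} bound on the expected number of visits to $u_i$ needed under any teacher. As updates at $u_i$ never affect $u_j$, the feathers are independent and $\mathbb{E}[V]=\sum_i\mathbb{E}[V_i]\ge m(A-1)$.

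Combining the two inequalities gives $\mathbb{E}[N]\ge m(A-1)/(1-\epsilon)^D$. Every episode runs the full horizon $H$ (after a deviation or a feather visit the learner merely idles in $\bot$ rather than terminating), so the teaching length is at least $H(\mathbb{E}[N]-1)$, whence
\[
\mathrm{METaL}(M,L,Q_0,\pi^\dagger)\;\ge\;\Omega\!\left((S-D)\,A\,H\left(\frac{1}{1-\epsilon}\right)^{D}\right),
\]
and since $TDim$ is the worst-case $\mathrm{METaL}$ this is the claimed bound. The $H$ factor is precisely the cost of a lucky traversal: once a feather is reached the learner is sunk into $\bot$, so a whole episode of length $H$ buys at most one feather visit.

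The step I expect to be the main obstacle is making the two expectation inequalities rigorous against an \emph{adaptive} teacher that observes $Q_t$: for navigation I need the $(1-\epsilon)^D$ per-episode reach probability to hold \emph{conditionally on every past history}, so that the episode-indexed process is a supermartingale, and for the per-feather bound I need that the recurrence value of \lemref{thm:expected} --- stated there only as an upper bound --- is genuinely the minimum expected per-feather cost, i.e.\ that ``demote the current greedy action'' is an optimal teacher response and that no cross-feather or cross-episode trick beats $A-1$ visits per feather. Handling the final partial episode and the boundary cases (very small $S-D$, $D=H$, $\epsilon\to1$) should be routine bookkeeping by comparison.
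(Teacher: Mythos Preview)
Your proposal is correct and follows essentially the same route as the paper: the same peacock construction (neck of length $D$, $S-D-1$ tail/feather states, an absorbing sink), the same two bottlenecks (a $(1-\epsilon)^D$ chance of traversing the neck per episode, and $A-1$ expected visits per feather via \lemref{thm:expected}), and the same multiplication to obtain $(S-D-1)(A-1)H(1-\epsilon)^{-D}$. The concerns you flag---that \lemref{thm:expected} is phrased as an upper bound but must here serve as a per-state lower bound, and that the $(1-\epsilon)^D$ reach probability must hold conditionally against an adaptive teacher---are exactly the points the paper itself treats informally, so your proposal is, if anything, more careful than the original.
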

	\textit{proof.}
	The proof uses a particularly hard RL teaching problem instance called the ``peacock MDP'' in Figure~\ref{fig:peacock} to produce a tight lower bound.
	The MDP has $S$ states where the first $D$ states form a linear chain (the ``neck''),
	the next $S-D-1$ states form a star (the ``tail''), and the last state $s^{(\bot)}$ is a special absorbing state. 
	The absorbing state can only be escaped when the agent resets after episode length $H$.
	The agent starts at $s^{(0)}$ after reset.  
	It is easy to verify that the peacock MDP has a diameter $D$.
	Each state has $A$ actions.
	For states along the neck, the $a_1$ action (in black) has probability $p>0$ of moving right, and probability $1-p$ to go to the absorbing state $s^{(\bot)}$; all other actions (in red) have probability $1$ of going to $s^{(\bot)}$. 
	The $a_1$ action of $s^{(D-1)}$ has probability $p$ to transit to each of the tail states. In the tail states, however, all actions lead to the absorbing state with probability $1$.
	We consider a target policy $\pi^\dagger$ where $\pi^\dagger(s)$ is a red action $a_2$ for all the tail states $s$.
	It does not matter what $\pi^\dagger$ specifies on other states.
	We define $Q_0$ such that $a_2$ is $\argmin_a Q_0(s,a)$ for all the tail states.
	
	The proof idea has three steps:
	(1) By Lemma~\ref{thm:expected} the agent must visit each tail node $s$ for $A-1$ times to teach the target action $a_2$, which was initially at the bottom of ${Q_0}(s, \cdot)$. 
	(2) But the only way that the agent can visit a tail state $s$ is to traverse the neck every time.
	(3) The neck is difficult to traverse as any $\epsilon$-exploration sends the agent to $s^{(\bot)}$ where it has to wait for the episode to end.
	
	We show that the expected number of steps to traverse the neck once is $H(\frac{1}{1-\epsilon})^D$ even in the best case, where the agent's behavior policy~\eqref{eq:behavior} prefers $a_1$ at all neck states.
	In this best case, the agent will choose $a_1$ with probability $1-\epsilon$ at each neck state $s$.
	If $a_1$ is indeed chosen by the agent, by construction the support of MDP transition $P(\cdot \mid s, a_1)$ contains the state to the right of $s$ or the desired tail state (via the transition with probability $p>0$).
	This enables the level 3 teacher to generate such a transition regardless of how small $p$ is (which is irrelevant to a level 3 teacher).
	In other words, in the best case, the agent can move to the right once with probability $1-\epsilon$.
	A successful traversal requires moving right $D$ times consecutively, which has probability $(1-\epsilon)^D$.
	The expected number of trials (to traverse) until success is $(\frac{1}{1-\epsilon})^D$.
	A trial fails if any time during a traversal the agent picked an exploration action $a$ other than $a_1$.
	Then the support of $P(\cdot \mid s, a)$ only contains the absorbing state $s^{(\bot)}$, so the teacher has no choice but to send the agent to $s^{(\bot)}$.
	There the agent must wait for the episode to complete until resetting back to $s^{(0)}$.
	Therefore, any failed trial incurs exactly $H$ steps of wasted teaching.
	Putting things together, the expected number of teaching steps until a successful neck traversal is done is at least $H(\frac{1}{1-\epsilon})^D$.
	
	There are $S-D-1$ tail states.  Each needs an expected $A-1$ neck traversals to teach.  This leads to the lower bound $(S-D-1)(A-1)H(\frac{1}{1-\epsilon})^D = \Omega\left((S-D)AH\left(\frac{1}{1-\epsilon}\right)^D\right)$. \qed
	
	Our next result shows that this lower bound is nearly tight, by constructing a level-3 teaching algorithm that can teach any MDP with almost the same sample complexity as above.
	\begin{theorem}\label{thm:l3_upper}
		Under the same conditions of Theorem~\ref{thm:l3_lower}, the level-3 teaching dimension is upper-bounded by
		\begin{equation}
		TDim \leq O\left(SAH\left(\frac{1}{1-\epsilon}\right)^D\right).
		\end{equation}
	\end{theorem}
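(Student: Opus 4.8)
\textit{Proof plan.}
We design a Level-3 teaching algorithm achieving $\mathrm{METaL}(M,L,Q_0,\pi^\dagger)=O(SAH(1/(1-\epsilon))^D)$ on every instance with diameter $D\le H$; since TDim is the worst case over such instances, this yields the bound. (Assume $A\ge 2$ and $\epsilon<1$, else TDim $=0$ or the bound is vacuous.) Fix a spanning BFS forest $\mathcal T$ of the transition-support graph of $M$ rooted at $\{s:\mu_0(s)>0\}$; by the definition of $D$ every state $s$ has depth $d(s)\le D$. For a non-root $s$ let $p(s)$ be its parent and $\alpha(s)$ an action with $P(s\mid p(s),\alpha(s))>0$. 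Call the root-to-$q$ path \emph{prepared} if at each node $u$ strictly above $q$ on it the agent's greedy action $\argmax_a Q(u,a)$ points to the next node of the path. When this holds, the teacher runs the \emph{navigation} primitive: override $s_0$ to the root; whenever the agent plays the forward action, move it to the intended next node (legal, since that node is in the support); whenever the agent explores, respond with a rank-\emph{unchanged} reward and some arbitrary supported next state and wait out the $\le H$ remaining steps. A rank-unchanged response alters no $Q$-ordering, so it disturbs neither prepared ancestors nor already-fixed states, and prepared prefixes persist until the teacher itself overwrites them; moreover each episode reaches $q$ (as $s_{d(q)}$, using $d(q)\le D\le H$) with probability $\ge(1-\epsilon)^{d(q)}$, so the expected number of episodes (each $\le H$ steps) per visit to $q$ is $\le(1/(1-\epsilon))^{d(q)}\le(1/(1-\epsilon))^D$.

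On top of navigation we use two subroutines, each an instance of ``make a designated action greedy at a single state'' and hence, by \lemref{thm:expected}, finishing within $\le A-1$ visits to that state in expectation: \textsc{Reconfig}$(u\to c)$ navigates to $u$ repeatedly and applies the Level-2 rewards of \lemref{thm:expected} with designated action $\alpha(c)$, ending with $\argmax_a Q(u,a)=\alpha(c)$; \textsc{Teach}$(s)$ does the same at $s$ with designated action $\pi^\dagger(s)$. By a Wald-type argument, combining ``$\le A-1$ visits'' with ``$\le(1/(1-\epsilon))^D$ episodes of $\le H$ steps per visit'', each call to either subroutine costs $O(AH(1/(1-\epsilon))^D)$ steps in expectation.

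The algorithm traverses $\mathcal T$ depth-first: at a node $u$ with children $c_1,\dots,c_k$, for each $i$ run \textsc{Reconfig}$(u\to c_i)$ and then recurse into $c_i$'s subtree; once all children are processed run \textsc{Teach}$(u)$; roots use the trivial navigation that just sets $s_0$. An induction on the DFS yields the invariant that whenever an operation belonging to $u$'s subtree runs, the root-to-$u$ path is prepared: \textsc{Reconfig}$(u\to c_i)$ establishes the $u$-to-$c_i$ link while touching only $Q(u,\cdot)$; operations inside a child's subtree touch only $Q$-entries of nodes strictly below $u$; and \textsc{Teach}$(u)$ is deferred until $u$'s subtree is done, so it cannot break descent into it. There is one \textsc{Reconfig} call per tree edge ($<S$ total) and one \textsc{Teach} call per state ($S$ total), so the total expected length is $O(SAH(1/(1-\epsilon))^D)$; and no operation disturbs a state after its \textsc{Teach} has run, so at termination $\argmax_a Q(s,a)=\pi^\dagger(s)$ for all $s$, i.e.\ $\pi_T=\pi^\dagger$.

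The crux is the tension that one intermediate state must present different greedy actions for navigating to its different descendants --- and the opposite action $\pi^\dagger(\cdot)$ at the very end --- so a naive ``rebuild a fresh root-to-target path for each of the $\Omega(SA)$ teaching visits'' loses a factor of about $A/(1-\epsilon)$. The resolution is exactly (a) DFS ordering, so a shared root-to-$u$ prefix is prepared once and reused across all of $u$'s descendants; (b) charging a node's $\le A-1$ re-preparation visits to the corresponding \emph{tree edge}, bounding the number of re-preparations by $O(S)$ rather than $O(S)$ per target; and (c) the observation that $\epsilon$-exploration, absorbed by rank-unchanged rewards, never undoes a prepared or taught state. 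A minor point, handled in the appendix's full algorithm, is checking that every navigation fits inside an episode, which uses $D\le H$.
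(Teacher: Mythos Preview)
Your proposal is correct and follows essentially the same approach as the paper's proof: build a BFS tree of depth $\le D$ rooted at the initial state(s), process the nodes in post-order DFS so that navigation prefixes are shared, use \lemref{thm:expected} to bound each node's teaching/reconfiguration by $A-1$ visits, and charge each of the $S-1$ tree edges a single \textsc{Reconfig} plus each of the $S$ states a single \textsc{Teach}, each costing $\le H(A-1)(1/(1-\epsilon))^D$ in expectation. Your framing with explicit \textsc{Reconfig}/\textsc{Teach} subroutines, the ``prepared path'' invariant, and the Wald-type justification for multiplying the two expectations is slightly more formal than the paper's example-driven exposition, but the algorithm, the key amortization insight (one reconfiguration per tree edge rather than per target), and the arithmetic are the same.
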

	
	\textit{proof.}
	We analyze a level-3 teaching algorithm NavTeach (Navigation-then-Teach) which, like any teaching algorithm, provides an upper bound on TDim.
	The complete NavTeach algorithm is given in the appendix; we walk through the main steps on an example MDP in Figure~\ref{fig:navteach}(a).  For the clarity of illustration the example MDP has only two actions $a_1, a_2$ and deterministic transitions (black and red for the two actions respectively), though NavTeach can handle fully general MDPs. The initial state is $s^{(0)}$.

	Let us say NavTeach needs to teach the ``always take action $a_1$'' target policy: $\forall s, \pi^\dagger(s)=a_1$.
	In our example, these black transition edges happen to form a tour over all states, but the path length is 3 while one can verify the diameter of the MDP is only $D=2$.
	In general, though, a target policy $\pi^\dagger$ will not be a tour.  It can be impossible or inefficient for the teacher to directly teach $\pi^\dagger$.
	Instead, NavTeach splits the teaching of $\pi^\dagger$ into subtasks for one ``target state'' $s$ at a time over the state space in a carefully chosen order.
	Importantly, before teaching each $\pi^\dagger(s)$ NavTeach will teach a different navigation policy $\pi^{nav}$ for that $s$.
	The navigation policy $\pi^{nav}$ is a partial policy that creates a directed path from $s^{(0)}$ to $s$, which is similar to the neck in the earlier peacock example.
	The goal of $\pi^{nav}$ is to quickly bring the agent to $s$ often enough so that the \emph{target} policy $\pi^\dagger(s)=a_1$ can be taught at $s$.  That completes the subtask at $s$.
	Critically, NavTeach can maintain this target policy at $s$ forever, while moving on to teach the next target state $s'$.
	This is nontrivial because NavTeach may need to establish a different navigation policy for $s'$: the old navigation policy may be partially reused, or demolished.
	Furthermore, all these need to be done in a small number of steps.
	\begin{figure*}[ht]
		\begin{subfigure}{.25\textwidth}
			\centering
			\includegraphics[width=0.85\textwidth]{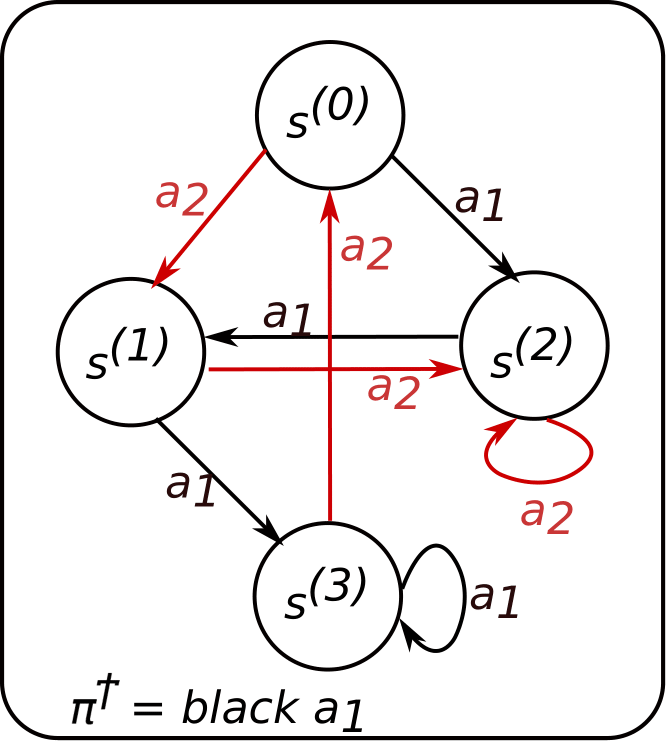} 
			\caption{MDP}
			\label{fig:sub-first}
		\end{subfigure}
		\begin{subfigure}{.25\textwidth}
			\centering
			\includegraphics[width=0.95\textwidth]{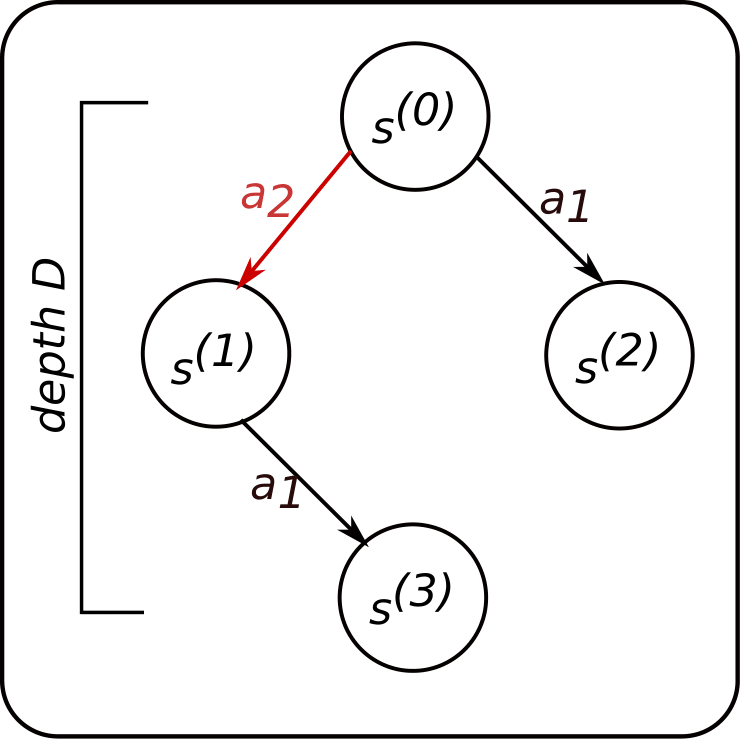}
			\caption{Breadth-First Tree}
			\label{fig:sub-second}
		\end{subfigure}
		\begin{subfigure}{.25\textwidth}
			\centering
			\includegraphics[width=0.925\textwidth]{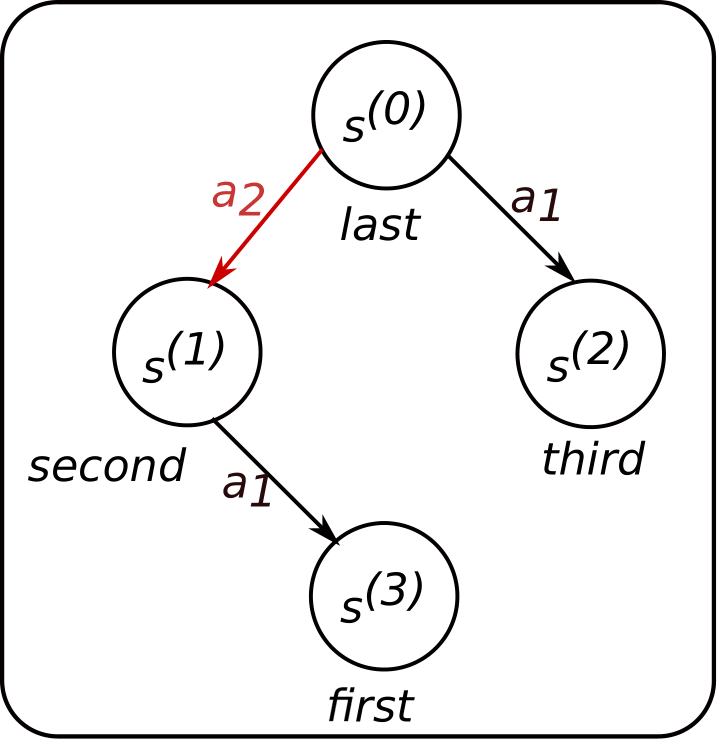}
			\caption{Depth-First Traversal}
			\label{fig:sub-third}
		\end{subfigure}
		\begin{subfigure}{.2\textwidth}
			\centering
			\includegraphics[width=0.95\textwidth]{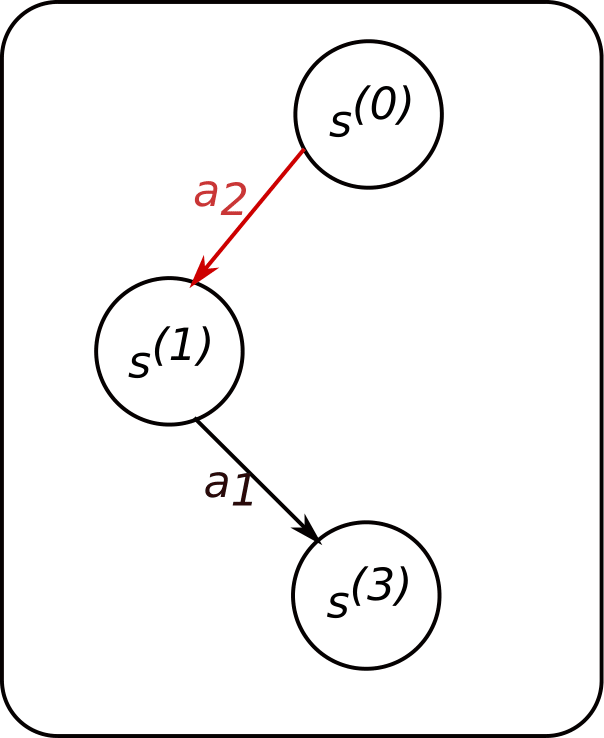}
			\caption{Navigation Policy}
			\label{fig:sub-fourth}
		\end{subfigure}
		\caption{NavTeach algorithm demo}
		\label{fig:navteach}
	\end{figure*}
	We now go through NavTeach on Figure~\ref{fig:navteach}(a).
	The first thing NavTeach does is to carefully plan the subtasks.
	The key is to make sure that (i) each navigation path is at most $D$ long; (ii) once a target state $s$ has been taught: $\pi^\dagger(s)=a_1$, it does not interfere with later navigation.
	To do so, NavTeach first constructs a directed graph where the vertices are the MDP states, and the edges are non-zero probability transitions of all actions.
	This is the directed graph of Figure~\ref{fig:navteach}(a), disregarding color.
	NavTeach then constructs a breadth-first-tree over the graph, rooted at $s^{(0)}$.  This is shown in Figure~\ref{fig:navteach}(b).
	Breadth-first search ensures that all states are at most depth $D$ away from the root.
	Note that this tree may uses edges that correspond to non-target actions, for example the red $a_2$ edge from $s^{(0)}$ to $s^{(1)}$.
	The ancestral paths from the root in the tree will form the navigation policy $\pi^{nav}$ for each corresponding node $s$.
	Next, NavTeach orders the states to form subtasks.
	This is done with a depth-first traversal on the tree: a depth-first search is performed, and the nodes are ranked by the last time they are visited.  This produces the order in Figure~\ref{fig:navteach}(c).
	The order ensures that later navigation is ``above'' any nodes on which we already taught the target policy, thus avoiding interference.
	
	Now NavTeach starts the first subtask of teaching $\pi^\dagger(s^{(3)})=a_1$, i.e. the black self-loop at $s^{(3)}$.
	As mentioned before, NavTech begins by teaching the navigation policy $\pi^{nav}$ for this subtask, which is the ancestral path of $s^{(3)}$ shown in Figure~\ref{fig:navteach}(d).
	How many teaching steps does it take to establish this $\pi^{nav}$?  Let us look at the nodes along the ancestral path.
	By Lemma~\ref{thm:expected} the agent needs to be at the root $s^{(0)}$ $A-1$ times in expectation in order for the teacher to teach $\pi^{nav}(s^{(0)})=a_2$; this is under the worst case scenario where the initial agent state $Q_0$ places $a_2$ at the bottom in state $s^{(0)}$.
	We will assume that after a visit to $s^{(0)}$, the remaining episode is simply wasted.
	\footnote{It is important to note that the teacher always has a choice of $r_t$ so that the teaching experience does not change the agent's $Q_t$ state.  For example, if the agent's learning algorithm $f$ is a standard Q-update, then there is an $r_t$ that keeps the Q-table unchanged.  So while in wasted steps the agent may be traversing the MDP randomly, the teacher can make these steps ``no-op'' to ensure that they do not damage any already taught subtasks or the current navigation policy.}
	Therefore it takes at most $H(A-1)$ teaching steps to establish $\pi^{nav}(s^{(0)})=a_2$.
	After that, it takes at most $H(A-1)(\frac{1}{1-\epsilon})$ expected number of teaching steps to teach $\pi^{nav}(s^{(1)})=a_1$.
	This is the same argument we used in Theorem~\ref{thm:l3_lower}: the teacher needs to make the agent traverse the partially-constructed ancestral path (``neck'') to arrive at $s^{(1)}$.
	The worst case is if the agent performs a random exploration action anywhere along the neck; it falls off the neck and wastes the full episode.
	In general to establish a nagivation policy $\pi^{nav}$ with path length $d$, NavTeach needs to teach each navigation edge at depth $i=1 \ldots d$ with  at most 
	$H(A-1)(\frac{1}{1-\epsilon})^{i-1}$
	teaching steps, respectively.
	After establishing this $\pi^{nav}$ for $s^{(3)}$, NavTeach needs to go down the neck frequently 
	to ensure that it visits $s^{(3)}$ $(A-1)$ times and actually teach the target policy $\pi^\dagger(s^{(3)})=a_1$.
	This takes an additional at most $H(A-1)(\frac{1}{1-\epsilon})^{d}$ teaching steps.

	When the $s^{(3)}$ subtask is done, according to our ordering in Figure~\ref{fig:navteach}(c) NavTeach will tackle the subtask of teaching $\pi^\dagger$ at $s^{(1)}$.  Our example is lucky because this new subtask is already done as part of the previous navigation policy.
	The third subtask is for $s^{(2)}$, where NavTeach will have to establish a new navigation policy, namely $\pi^{nav}(s^{(0)})=a_1$.  And so on.
	How many total teaching steps are needed?
	\textbf{A key insight is NavTeach only needs to teach any navigation edge in the breadth-first tree exactly once.}
	This is a direct consequence of the depth-first ordering: there can be a lot of sharing among navigation policies; a new navigation policy can often re-use most of the ancestral path from the previous navigation policy.
	Because there are exactly $S-1$ edges in the breadth-first tree of $S$ nodes, the total teaching steps spent on building navigation policies is the sum of $S-1$ terms of the form $H(A-1)(\frac{1}{1-\epsilon})^{i-1}$ where $i$ is the depth of those navigation edges.  
	We can upperbound the sum simply as $(S-1)H(A-1)(\frac{1}{1-\epsilon})^{D}$.
	On the other hand, the total teaching steps spent on building the target policy $\pi^\dagger$ at all target states is the sum of $S$ terms of the form $H(A-1)(\frac{1}{1-\epsilon})^{d}$ where $d$ is the depth of the target state.
	We can upperbound the sum similarly as $SH(A-1)(\frac{1}{1-\epsilon})^{D}$.
	Putting navigation teaching and target policy teaching together, we need at most $(2S-1)H(A-1)(\frac{1}{1-\epsilon})^{D} = O\left(SAH\left(\frac{1}{1-\epsilon}\right)^D\right)$ teaching steps. \qed
	
	We remark that more careful analysis can in fact provide matching lower and upper bounds up to a constant factor, in the form of $\Theta\left((S-D)AH(1-\epsilon)^{-D} + H\frac{1-\epsilon}{\epsilon}[(1-\epsilon)^{-D}-1]\right)$. We omit this analysis for the sake of a cleaner presentation. However, the matching bounds imply that a deterministic learner, with $\epsilon = 0$ in the $\epsilon$-greedy behavior policy, has the smallest teaching dimension. This observation aligns with the common knowledge in the standard RL setting that algorithms exploring with stochastic behavior policies are provably sample-inefficient \cite{li2012sample}.
	\begin{corollary}\label{thm: cor}
		For Level 3 Teacher, any learner in $\mathcal L$ with $\epsilon=0$, and any MDP $M$ within the MDP family $\mathcal M$ with $|\mathcal S| = S$, $|\mathcal A| = A$, episode length $H$ and diameter $D\leq H$, we have
		$
		TDim= \Theta\left(SAH\right).
		$
	\end{corollary}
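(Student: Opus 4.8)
The plan is to recognize the corollary as the point where the general Level~3 bounds pinch together: the upper bound falls out of \thmref{thm:l3_upper} by setting $\epsilon=0$, and the matching lower bound comes from specializing the peacock instance of \thmref{thm:l3_lower} to a deterministic learner.

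\emph{Step 1: the upper bound.} I would substitute $\epsilon = 0$ into \thmref{thm:l3_upper}. The exploration-blowup factor $\left(\frac{1}{1-\epsilon}\right)^{D}$ becomes $1^{D}=1$, so the bound reads $TDim \le O(SAH)$. I would also check that nothing in the NavTeach analysis degenerates at $\epsilon=0$: a greedy agent never ``falls off the neck'', so building a navigation edge at depth $i$ costs $H(A-1)$ instead of $H(A-1)\left(\frac{1}{1-\epsilon}\right)^{i-1}$, the sum over the $S-1$ breadth-first-tree edges is $(S-1)H(A-1)$, the sum over the $S$ target states is $SH(A-1)$, and the grand total is $(2S-1)H(A-1) = O(SAH)$.

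\emph{Step 2: the lower bound.} I would exhibit a single instance in $\mathcal M$ forcing $\Omega(SAH)$ teaching steps against any $\epsilon=0$ learner in $\mathcal L$ under a Level~3 teacher, namely a degenerate peacock with a length-one neck, a tail of $\Theta(S)$ states, and the absorbing state $s^{(\bot)}$, with $Q_0$ chosen so that $\pi^\dagger(s)=\argmin_a Q_0(s,a)$ at every tail state $s$ and $\pi^\dagger(s^{(0)})=a_1$ (so the neck action stays greedy throughout). Three observations finish it: (i) by \lemref{thm:expected}, each of the $\Theta(S)$ tail states requires $A-1$ demotions to promote its bottom-ranked target action; (ii) a tail state is entered only from $s^{(0)}$ on a step where the agent plays $a_1$ there, which with $\epsilon=0$ and $\pi^\dagger(s^{(0)})=a_1$ happens every episode; (iii) the transition out of any tail state goes to $s^{(\bot)}$ with probability $1$, so after a single tail visit the agent is stuck in $s^{(\bot)}$ for the rest of that episode, hence at most one demotion occurs per episode and each such episode still costs the full $H$ steps toward METaL. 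Multiplying, at least $(S-2)(A-1)$ episodes of $H$ steps each are needed, i.e.\ $\Omega(SAH)$. Combining Steps~1 and~2 gives $TDim = \Theta(SAH)$.

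\emph{Main obstacle.} The subtle point is the interaction with the diameter: a construction with $\Theta(S)$ ``trap'' tail states naturally has a small diameter, so if ``$D\le H$'' were read as a pinned parameter rather than a family-level constraint, the bound obtainable this way would be only $\Omega((S-D)AH)$ (exactly \thmref{thm:l3_lower} at $\epsilon=0$), matching $\Theta(SAH)$ only when $D=o(S)$. Removing this slack for every admissible $D$ would require a more delicate argument making the neck states themselves costly to teach while they are traversed --- an interference analysis I would rather sidestep by reading the hypothesis as allowing us to pick the hard instance's diameter freely (e.g.\ $D=1$).
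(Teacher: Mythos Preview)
Your argument is correct and takes a more direct route than the paper. The paper derives the corollary by first establishing (in the appendix) the refined two-term bound $\Theta\big((S-D)AH(1-\epsilon)^{-D} + H\tfrac{1-\epsilon}{\epsilon}[(1-\epsilon)^{-D}-1]\big)$ valid for all $\epsilon$---which requires additionally accounting for the cost of teaching the neck states (lower-bound side) and observing that the $S$ states cannot all sit at depth $D$ (upper-bound side)---and then sending $\epsilon\to 0$. You instead read the upper bound straight off \thmref{thm:l3_upper} at $\epsilon=0$, and for the lower bound exhibit a single concrete instance (the length-one-neck peacock) in which every demotion at a tail state consumes a full $H$-step episode. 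Your route is more elementary and avoids the refinement machinery; the paper's route has the compensating advantage that it produces a tight expression for every $\epsilon$, with the corollary falling out as a limiting case. The diameter subtlety you flag in your final paragraph applies to the paper's derivation as well: at $\epsilon\to 0$ the paper's refined bound collapses to $\Theta((S-D)AH+HD)$, which equals $\Theta(SAH)$ precisely under the reading you adopt---namely that the family admits instances of small diameter, so the worst case may be taken at $D=O(1)$.
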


	\subsection{Level 4 Teacher}
	In Level 4, the teacher no longer has control over state transitions. The next state will be sampled according to the transition dynamics of the underlying MDP, i.e. $s_{t+1}\sim P(\cdot|s_t,a_t)$. As a result, the only control power left for the teacher is the control of reward, coinciding with the reward shaping framework. Therefore, our results below can be viewed as a sample complexity analysis of RL under \textit{optimal reward shaping}. Similar to Level 3, we provide near-matching lower and upper-bounds on TDim. 
	\begin{theorem}\label{thm: l4_lower}
		For Level 4 Teacher, and any learner in $ \mathcal L$, and an MDP family $\mathcal M$ with $|\mathcal S| = S$, $|\mathcal A| = A\geq 2$, episode length $H$, diameter $D\leq H$ and minimum transition probability $p_{\min}$, the teaching dimension is lower-bounded by
		$
		TDim \geq \Omega\left((S-D)AH\left(\frac{1}{p_{\min}(1-\epsilon)}\right)^D\right).
		$
	\end{theorem}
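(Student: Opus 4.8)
\textit{Proof proposal.}
The plan is to mirror the lower-bound argument for \thmref{thm:l3_lower}, replacing the peacock MDP of \figref{fig:peacock} by a close variant that exposes the \emph{extra} handicap of a Level~4 teacher: it may no longer pick the realized next state, which is instead drawn from $P(\cdot\mid s_t,a_t)$, so every step of the ``neck'' becomes a factor $1/p_{\min}$ harder to cross. Concretely I would keep the neck of length $D$ and the absorbing state $s^{(\bot)}$, but set each ``useful'' neck transition (the $a_1$ edge moving the agent one step toward the target region) to have probability exactly $p_{\min}$, with the complementary probability $1-p_{\min}$ leading to $s^{(\bot)}$; every other (red) action at a neck state still goes to $s^{(\bot)}$ with probability $1$. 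Place the $\Theta(S-D)$ ``target'' states at MDP-distance $D$ from $s^{(0)}$ (so the diameter is $D\le H$), let $\pi^\dagger$ request a red action at each of them, and take $Q_0$ so that the requested action is $\argmin_a Q_0(s,a)$ at every target state. As in \thmref{thm:l3_lower} it suffices to analyze the \emph{best} case for the teacher, in which the behavior policy \eqref{eq:behavior} already prefers $a_1$ at every neck state; the teacher can keep those entries of $Q_t$ frozen with a no-op reward, cf.\ the footnote to \thmref{thm:l3_upper}.

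The counting then has two parts. First, by \lemref{thm:expected}, teaching the requested (bottom-ranked) action at one target state costs $A-1$ visits in expectation, so teaching all $\Theta(S-D)$ target states costs $\Omega((S-D)A)$ target visits in expectation; moreover each target visit must fall in a distinct episode, since once the agent is at a target state every action dumps it into $s^{(\bot)}$, and (exactly as in the Level~3 proof) the agent can reach a target state only by a full neck traversal starting from $s^{(0)}$ after a reset. Second, I bound the expected number of episodes needed to produce one such traversal: at each of the $D$ neck states the agent picks $a_1$ with probability $1-\epsilon$, and conditioned on $a_1$ the MDP advances it toward the target region only with probability $p_{\min}$, any other outcome sending it to $s^{(\bot)}$ for the rest of the episode; hence a given fresh episode yields a successful traversal with probability at most $\bigl((1-\epsilon)p_{\min}\bigr)^{D}$, so $\Omega\bigl((1-\epsilon)p_{\min}\bigr)^{-D}$ episodes are required per target visit in expectation, and each episode spans $H$ steps because the teacher cannot shorten an episode and wasted episodes merely idle in $s^{(\bot)}$.

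Multiplying the two parts gives $TDim \ge \Omega\!\left((S-D)AH\bigl(\frac{1}{p_{\min}(1-\epsilon)}\bigr)^{D}\right)$, using $A\ge 2$ and $D\le H$; since this is the $\mathrm{METaL}$ of a single instance of $\mathcal M$ it lower-bounds $TDim$. The step I expect to require the most care is making the ``$\Omega((S-D)A)$ target visits, each costing $\Omega((1-\epsilon)^{-D}p_{\min}^{-D})$ episodes'' accounting rigorous: unlike the Level~3 teacher, a Level~4 teacher cannot \emph{aim} a traversal at the target state it is currently working on, so an episode that does cross the neck lands the agent at whatever state $P(\cdot\mid s^{(D-1)},a_1)$ dictates. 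I would resolve this either by choosing the geometry of the target region so that each traversal is compelled to make progress on a pre-determined target state, or by absorbing an explicit coupon-collector-type factor arising from the agent landing at a (nearly) uniformly random target state and checking it does not erode the stated bound; settling which device is needed — together with the episode-level bookkeeping for the fraction of episodes that simply idle in $s^{(\bot)}$ — is the crux, the rest being a re-run of the \thmref{thm:l3_lower} analysis with $(1-\epsilon)$ replaced by $(1-\epsilon)p_{\min}$.
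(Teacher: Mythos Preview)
Your proposal is essentially the paper's argument, and the paper resolves your acknowledged ``crux'' by committing to your first option. Concretely, it replaces the star-shaped peacock tail by a depth-$d$ \emph{binary} tree (the ``peacock tree MDP''): the neck now has length $D-d-1$, the $2^d=\Theta(S-D)$ leaves sit at depth exactly $D$, and at every internal tree node action $a_1$ heads to one child and $a_2$ to the other, each succeeding with probability $p_{\min}$ (else $s^{(\bot)}$). Because the aimed-at leaf is then a deterministic function of the agent's action sequence, the best case for the teacher is exactly the one you analyze, and the per-leaf count $H\bigl((1-\epsilon)p_{\min}\bigr)^{-D}$ times $(A-1)$ visits times $\Theta(S-D)$ leaves goes through verbatim. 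Your second option (keep the star tail and absorb a coupon-collector factor) does \emph{not} work as a drop-in: a single state fanning out to $\Theta(S-D)$ tail states forces each outgoing probability to be $O\bigl(1/(S-D)\bigr)$, which violates the prescribed minimum transition probability $p_{\min}$ whenever $S-D\gg 1/p_{\min}$; the tree geometry is exactly what lets every useful transition have probability $p_{\min}$ while still placing $\Theta(S-D)$ target states at distance $D$.
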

	\begin{theorem}\label{thm: l4_upper}
		For Level 4 Teacher, any learner in $ \mathcal L$, and any MDP $M$ within the MDP family $\mathcal M$ with $|\mathcal S| = S$, $|\mathcal A| = A$, episode length $H$, diameter $D\leq H$ and minimum transition probability $p_{\min}$, the Nav-Teach algorithm in the appendix can teach any target policy $\pi^\dagger$ in a expected number of steps at most
		$
		TDim \leq O\left(SAH\left(\frac{1}{p_{\min}(1-\epsilon)}\right)^D\right).
		$
	\end{theorem}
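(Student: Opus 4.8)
\textit{Proof plan.}
The plan is to re-run the NavTeach algorithm analyzed in the proof of \thmref{thm:l3_upper} (the Level 4 variant in the appendix is essentially identical, illustrated in \figref{fig:navteach}), and to show that surrendering control of the next state inflates the Level 3 bound by exactly a factor $(1/p_{\min})^D$. I would reuse the skeleton of NavTeach unchanged, since it depends only on the transition \emph{graph} of $M$ and not on the transition probabilities: build the directed graph with one edge per nonzero-probability transition of any action; take a breadth-first tree rooted at $s^{(0)}$, so every state lies at depth at most $D$; order the states by last-visit time in a depth-first traversal of that tree; then process the states in that order, and for each target state $s$ first install the navigation policy $\pi^{nav}$ along the ancestral path of $s$ (edge by edge, from the root downward) and afterwards drive the agent down that path $A-1$ times to install $\pi^\dagger(s)$ at $s$. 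Two structural facts carry over verbatim from the \thmref{thm:l3_upper} proof: each of the $S-1$ tree edges is taught as a navigation edge exactly once across the whole run (a consequence of the depth-first order), and installing $\pi^\dagger(s)$ never destroys a navigation edge that will still be needed.

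The single quantity I would need to re-estimate is the expected cost of making the agent traverse one edge of an already-installed partial neck. In Level 3, once $\pi^{nav}$ is greedy at a neck node, the agent picks the navigation action with probability $1-\epsilon$ and the teacher then \emph{places} it on the intended tree child. A Level 4 teacher cannot place it; conditioned on the navigation action being chosen, the next state is drawn from $P(\cdot\mid s,\pi^{nav}(s))$, which assigns at least $p_{\min}$ mass to the intended child (the child is in the support by construction of the tree). Hence one step down the neck lands correctly with probability at least $p_{\min}(1-\epsilon)$; otherwise I would simply declare the trial failed. A failed trial should be harmless, for the same reason as in \thmref{thm:l3_upper}: by the ``controllable'' property of $\mathcal L$ the teacher can issue a no-op reward that leaves $Q_t$ untouched, so the scattered agent corrupts neither the finished target states nor the current navigation policy, and after at most $H$ steps the episode ends and resets the agent to $s^{(0)}$. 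The steps of a traversal are then independent $\mathrm{Bernoulli}(\ge p_{\min}(1-\epsilon))$ events with a full restart after each failure, so traversing an installed sub-path of length $\ell$ once costs at most $H(p_{\min}(1-\epsilon))^{-\ell}$ expected steps.

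With that single substitution the summation from \thmref{thm:l3_upper} goes through. Installing the navigation edge into a depth-$i$ node needs the agent at its depth-$(i-1)$ parent $A-1$ times (\lemref{thm:expected}), i.e. at most $H(A-1)(p_{\min}(1-\epsilon))^{-(i-1)} \le H(A-1)(p_{\min}(1-\epsilon))^{-D}$ steps, using $i-1\le D$ and $p_{\min}(1-\epsilon)\le 1$; summing over the $S-1$ tree edges gives at most $(S-1)H(A-1)(p_{\min}(1-\epsilon))^{-D}$. Symmetrically, installing $\pi^\dagger(s)$ at a depth-$d$ target state needs the agent at $s$ itself $A-1$ times, costing at most $H(A-1)(p_{\min}(1-\epsilon))^{-d}\le H(A-1)(p_{\min}(1-\epsilon))^{-D}$; summing over the $S$ target states gives at most $SH(A-1)(p_{\min}(1-\epsilon))^{-D}$. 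Adding the two, NavTeach completes in at most $(2S-1)H(A-1)(p_{\min}(1-\epsilon))^{-D} = O\left(SAH\left(\frac{1}{p_{\min}(1-\epsilon)}\right)^D\right)$ expected steps, which upper bounds $TDim$ and nearly matches the lower bound of \thmref{thm: l4_lower}.

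The part I expect to be fiddliest is the renewal argument in the middle paragraph: making precise that the agent's $\epsilon$-randomness and the environment's $P(\cdot\mid s,a)$-randomness compose into a genuine sequence of i.i.d.\ trials with success probability at least $p_{\min}(1-\epsilon)$, so that the nested Wald-style accounting (geometric number of traversal attempts per successful arrival, times $A-1$ arrivals, times at most $H$ steps per attempt) is legitimate, and in particular that a misfired transition can always be rendered a true ``no-op'' under the controllable assumption on $\mathcal L$ even when the stray landing happens to be on a previously taught state. Everything else is the mechanical replacement of $1-\epsilon$ by $p_{\min}(1-\epsilon)$ in the Level 3 ledger.
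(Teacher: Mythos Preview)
Your proposal is correct and follows essentially the same approach as the paper's proof: both run NavTeach unchanged, replace the single-step success probability $1-\epsilon$ by $p_{\min}(1-\epsilon)$ to account for the random transition, and then repeat the Level 3 accounting verbatim to arrive at the identical bound $(2S-1)(A-1)H\big(p_{\min}(1-\epsilon)\big)^{-D}$. Your discussion of the renewal argument and the no-op reward on stray landings is, if anything, more explicit than the paper's own proof, but the structure and key idea are the same.
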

	The proofs for Theorem \ref{thm: l4_lower} and \ref{thm: l4_upper} are similar to those for Theorem \ref{thm:l3_lower} and \ref{thm:l3_upper}, with the only difference that under a level 4 teacher the expected time to traverse a length $D$ path is at most $H(1/p_{\min}(1-\epsilon))^{D}$ in the worst case.
	The $p_{\min}$ factor accounts for sampling from $P(\cdot \mid s_t, a_t)$.
	Similar to Level 3 teaching, we observe that a deterministic learner incurs the smallest TDim, but due to the stochastic transition, an exponential dependency on $D$ is unavoidable in the worst case.
	\begin{corollary}
		For Level 4 Teacher, any learner in $\A$ with $\epsilon=0$, and any MDP $M$ within the MDP family $\mathcal M$ with $|\mathcal S| = S$, $|\mathcal A| = A$, episode length $H$, diameter $D\leq H$ and minimum transition probability $p_{\min}$, we have
		$
		TDim \leq O\left(SAH\left(\frac{1}{p_{\min}}\right)^D\right).
		$
	\end{corollary}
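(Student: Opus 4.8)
\textit{Proof plan.} This statement is the $\epsilon=0$ specialization of Theorem~\ref{thm: l4_upper}, so the plan is simply to trace how the bound $O\!\left(SAH\left(\frac{1}{p_{\min}(1-\epsilon)}\right)^D\right)$ degenerates at $\epsilon=0$. First I would invoke Theorem~\ref{thm: l4_upper} directly: substituting $\epsilon=0$ makes $1-\epsilon=1$, so the factor $\left(\frac{1}{p_{\min}(1-\epsilon)}\right)^D$ collapses to $\left(\frac{1}{p_{\min}}\right)^D$ and the claimed bound $TDim\leq O\!\left(SAH\left(\frac{1}{p_{\min}}\right)^D\right)$ drops out immediately. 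Since Theorem~\ref{thm: l4_upper} is already in hand, nothing further is strictly required.

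For transparency I would also re-run the NavTeach analysis of Theorem~\ref{thm:l3_upper}, adapted to Level~4 as in Theorem~\ref{thm: l4_upper}, under the assumption $\epsilon=0$, so that the origin of each factor is explicit. The key steps are: (i) plan the subtasks via a breadth-first tree rooted at $s^{(0)}$ together with a depth-first ordering, exactly as before, so that every navigation edge sits at depth at most $D$ and is taught at most once, and every target state also sits at depth at most $D$; (ii) observe that with $\epsilon=0$ the learner is deterministic, so once a navigation edge $\pi^{nav}(s)=a$ has been installed the learner always selects $a$ at $s$ -- there is no exploration-induced ``falling off the neck''; (iii) the only remaining source of failure when traversing a partial navigation path of length $d$ is that the sampled next state $s_{t+1}\sim P(\cdot\mid s_t,a)$ differs from the intended child in the tree, which happens with probability at most $1-p_{\min}$ per step, so a full traversal succeeds with probability at least $p_{\min}^{d}$ and needs at most $(1/p_{\min})^{d}\le (1/p_{\min})^{D}$ trials in expectation, each failed trial costing at most $H$ steps; (iv) combine this with Lemma~\ref{thm:expected} ($A-1$ visits in expectation to teach each state, which remains valid since its recursion gives $T(A-c)=A-c$ when $\epsilon=0$) and the bookkeeping over the $S-1$ navigation edges and $S$ target states to obtain a total of $O\!\left(SAH(1/p_{\min})^{D}\right)$ teaching steps.

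The only point that deserves care -- and the closest thing to an obstacle here -- is checking that setting $\epsilon=0$ does not break any step of the original NavTeach argument. The ``no-op'' trick for wasted steps still applies, since by the controllability assumption on $\mathcal L$ the teacher can always pick an $r_t$ that leaves $Q_t$ unchanged; the depth-first ordering argument guaranteeing each navigation edge is taught exactly once is independent of $\epsilon$; and Lemma~\ref{thm:expected} still yields the $A-1$ bound at $\epsilon=0$. Once these are verified, the $(1-\epsilon)^{-D}$ factor is simply absent, while the exponential dependence on $D$ persists because the transition stochasticity (captured by $p_{\min}$) cannot be removed by a Level~4 teacher, and the claimed bound follows.
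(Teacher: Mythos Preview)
Your proposal is correct and matches the paper's (implicit) approach: the corollary is stated immediately after Theorem~\ref{thm: l4_upper} without a separate proof, and is obtained exactly as you say by substituting $\epsilon=0$ into the bound $O\!\left(SAH\left(\frac{1}{p_{\min}(1-\epsilon)}\right)^D\right)$. Your additional re-derivation of the NavTeach analysis at $\epsilon=0$ is more detail than the paper provides, but it is sound and the checks you flag (Lemma~\ref{thm:expected} at $\epsilon=0$, the no-op trick, the depth-first ordering) all go through.
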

	
	\section{Sample efficiencies of standard RL, TbD and TbR}
	In the standard RL setting, some learners in the learner family $\mathcal L$, such as UCB-B, are provably efficient and can learn a $\delta$-optimal policy in $O(H^3SA/\delta^2)$ iterations \cite{jin2018q}, where $\delta$-optimal means that the cumulative rewards achieved by the output policy is only $\delta$-worse than the optimal policy, i.e. $V^*(\mu_0)-V^\pi(\mu_0)\leq \delta$. One direct implication of such a measure is that the remote states that are unreachable also hardly affect the policy's performance, so quantities like the diameter of the MDP does not appear in the bound.
	
	In contrast, in our TbR work, we aim at learning the \textit{exact} optimal policy, and will thus suffer exponentially if some states are nearly unreachable. However, if we assume that all states have reasonable visiting probabilities, then even the weakest teacher (Level 3 and 4) can teach the optimal policy in $O(HSA)$ iterations, which is of $H^2$ factor better than the best achievable rate without a teacher. More interestingly, even the learners with a not as good learning algorithm, e.g. standard greedy Q-learning, which can never learn the optimal policy on their own, can now learn just as efficiently under the guidance of an optimal teacher.
	
	Teaching-by-demonstration is the most sample efficient paradigm among the three, because the teacher can directly demonstrate the optimal behavior $\pi^\dagger(s)$ on any state $s$, and effectively eliminate the need for exploration and navigation. If the teacher can generate arbitrary $(s, a)$ pairs, then he can teach any target policy with only $S$ iterations, similar to our Level 1 teacher. If he is also constrained to obey the MDP, then it has been shown that he can teach a $\delta$-optimal policy in $O(SH^2/\delta)$ iterations \cite{sun2017deeply, rajaraman2020toward}, which completely drops the dependency on the action space size $A$ compared to both RL and TbR paradigms. Intuitively, this is due to the teacher being able to directly demonstrate the optimal action, whereas, in both RL and TbR paradigms, the learner must try all actions before knowing which one is better.
	
	In summary, in terms of sample complexity, we have
	\begin{eqnarray}
	RL>TbR>TbD.
	\end{eqnarray}
	
	\section{Conclusion and Discussions}
	We studied the problem of teaching Q-learning agents under various levels of teaching power in the Teaching-by-Reinforcement paradigm. At each level, we provided near-matching upper and lower bounds on the teaching dimension and designed efficient teaching algorithms whose sample complexity matches the teaching dimension in the worst case.
	Our analysis provided some insights and possible directions for future work:
	\begin{enumerate}[leftmargin=*, nolistsep]
		\item \textbf{Agents are hard to teach if they randomly explore:} Even under an optimal teacher, learners with stochastic behavior policies ($\epsilon>0$) necessarily suffer from exponential sample complexity, coinciding with the observation made in the standard RL setting \cite{li2012sample}.
		\item \textbf{Finding METaL is NP-hard:} While we can quantify the worst-case TDim, for a particular RL teaching problem instance we show that computing its METaL is NP-hard in Appendix.
		\item \textbf{The controllability issue:} What if the teacher cannot fully control action ranking in agent's $Q_t$ via reward $r$ (see agent ``Learning Update'' in section~\ref{sec:problem})?  This may be the case when e.g. the teacher can only give rewards in $[0,1]$.
		The TDim is much more involved because the teacher cannot always change the learner's policy in one step. Such analysis is left for future work.
		\item \textbf{Teaching RL agents that are not Q-learners}: In the appendix, we show that our results also generalize to other forms of Temporal Difference (TD) learners, such as SARSA. Nevertheless, it remains an open question of whether even broader forms of RL agents (e.g. policy gradient and actor-critic methods) enjoy similar teaching dimension results.
	\end{enumerate}
	
	\bibliography{TDRL}

 	\onecolumn
	\newpage
	\appendix
	\appendixpage

	\section{The Computational Complexity of Finding METaL}\label{sec:complexity}
	
	In this section, we discuss another aspect of teaching, namely the computational complexity of finding the exact minimum expected teaching length of a particular teaching problem instance, i.e. $METal(M,L,Q_0,\pi^\dagger)$.
	Note this differs from TDim in that it is instance-specific.
	
	For Level 1 and Level 2 teachers, the exact METaL can be found with polynomial-time algorithms Alg.~\ref{alg:puppet} and Alg.~\ref{alg:level2}. 
	Now, we show that for the less powerful Level 3 teacher, finding METaL of a particular instance is NP-hard. In particular, it is as hard as the Asymmetric TSP problem. 
	\begin{definition}
		An Asymmetric TSP problem~\cite{DBLP:journals/corr/abs-1708-04215}, characterized by a directed graph $G = (V,E)$ and a starting vertex $v\in V$, is defined as finding the minimum length path that starts from $v$ and visits all vertices $v'\in V$ at least once.
	\end{definition}

	\begin{theorem}
		Finding the METaL of a Level 3 teaching problem instance is at least as hard as the Asymmetric Traveling Salesman Problem(ATSP), which is NP-hard; This also means that the best polynomial-time approximation algorithm can only achieve a constant-factor approximation.
	\end{theorem}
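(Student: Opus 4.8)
\textit{Proof plan.}
We reduce the unit-weight version of ATSP --- equivalently, finding a shortest walk from a vertex $v$ that visits every vertex of a digraph $G=(V,E)$ at least once --- to the computation of $\mathrm{METaL}$ for a Level~3 instance; arbitrary edge weights are recovered afterwards by a padding gadget. Given $G$ with $|V|=n$ and start vertex $v$, build an MDP $M$ with state space $V$ and two actions $a_1,a_2$, where for every $u\in V$ both $P(\cdot\mid u,a_1)$ and $P(\cdot\mid u,a_2)$ are supported exactly on the out-neighborhood of $u$ in $G$ (a self-loop is added at any sink, which never shortens a covering walk, and the precise nonzero probabilities are irrelevant to a Level~3 teacher). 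Put all of $\mu_0$ on $v$, set $R\equiv 0$, fix the episode length $H$ to a sufficiently large polynomial in $n$ (say $H=2n^2$), take the learner $L\in\mathcal L$ to be standard $Q$-learning with $\epsilon=0$, and choose $Q_0$ so that at every $u\in V$ the target action is ranked \emph{last}. The target policy is $\pi^\dagger\equiv a_1$.

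The crux is a ``teach-and-route'' observation. Whenever the deterministic agent is at $u$ with $\pi^\dagger(u)$ not yet greedy, it plays $a_2$; by the controllability property of $\mathcal L$ the teacher can then choose a reward that demotes $a_2$ below $a_1$, so a \emph{single} visit to $u$ makes $\pi^\dagger(u)$ the greedy action at $u$ permanently (on later transitions out of $u$ the teacher uses a no-op reward). Independently, since both actions' supports equal $u$'s out-neighborhood, the teacher may route $s_{t+1}$ to any out-neighbor of $u$ regardless of which action the agent played; and because the learning update only modifies the $(s_t,a_t)$ entry, $u$ can be taught only while the agent is physically at $u$. Hence a run of the protocol traces a walk in $G$ from $v$ (episode resets merely restart it at $v$), and $\pi_T=\pi^\dagger$ holds exactly when every $u\in V$ has appeared among $s_0,\dots,s_{T-1}$; conversely the teacher can realize any prescribed covering walk, teaching each vertex ``for free'' on its first visit. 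Counting experiences, a single episode realizing a shortest covering walk of length $\ell^\star$ finishes in $\ell^\star+1$ steps (the $+1$ being the teaching transition out of the last-discovered vertex), whereas any solution using a second episode already costs at least $H>\ell^\star+1$ steps; therefore $\mathrm{METaL}(M,L,Q_0,\pi^\dagger)=\ell^\star+1$ (and $=\infty$ iff $G$ has no covering walk from $v$). Since the construction is polynomial and $\ell^\star$ is precisely the ATSP optimum, a polynomial-time algorithm for $\mathrm{METaL}$ yields one for unit-weight ATSP, which contains Hamiltonian Path and is NP-hard; and because $\ell^\star=\Omega(n)$ swamps the additive constant, the reduction is approximation-preserving, so $\mathrm{METaL}$ inherits the APX-hardness of ATSP --- no polynomial-time scheme can beat a constant-factor approximation.

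To handle arbitrary polynomially bounded edge weights, replace an edge $u\to u'$ of weight $w$ by a chain of $w-1$ fresh ``dummy'' states whose $Q_0$ already ranks a fixed action on top; such states require zero teaching but each costs one no-op step to traverse, so $\mathrm{METaL}$ again tracks the weighted ATSP optimum up to the same additive constant. The remaining work is purely book-keeping: one must check that any digraph admitting a covering walk admits one of length $<n^2$ --- by, in a shortest such walk, short-cutting each maximal segment between two consecutive first-discovery events to a shortest sub-walk of length at most $n-1$ --- so that a single episode of length $H=2n^2$ always suffices and a second episode can never help the teacher, and that the additive slack is a genuine constant independent of $H$. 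That accounting, rather than the teaching mechanics (which follow immediately from the learner-family axioms), is the only delicate point.
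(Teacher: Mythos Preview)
Your reduction is correct and is essentially the same construction the paper uses: map each ATSP vertex to a state with two actions whose transition supports both equal the vertex's out-neighborhood, put $\mu_0$ on the start vertex, rank the target action last in $Q_0$, and observe that with two actions one visit per state suffices, so optimal teaching traces a shortest covering walk. Your write-up is in fact more careful than the paper's on several points---the additive $+1$, the choice of $H$ large enough to make multi-episode strategies dominated, sinks, and the approximation-preserving argument via $\ell^\star=\Omega(n)$ and the padding gadget for weighted instances---none of which the paper spells out.
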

	
	\begin{proof}
		We show a polynomial-time reduction from ATSP problem to a Level 3 METaL problem. Specifically, we show that for every ATSP problem instance $G=(V,E)$, there exists a Level 3 METaL problem instance $(M,L,Q_0,\pi^\dagger)$ such that the ATSP problem instance has a solution $l$ if and only if the corresponding METaL instance has a solution $l$.
		
		
		The reduction is as follows. Given an ATSP problem instance $\{\text{Graph } G=(V,E), \text{ start vertex = } s_0\}$, we provide a construction to a level 3 METal problem instance $(M,L,Q_0,\pi^\dagger)$. We start by constructing the MDP first. 
		The vertex set $V$ forms the state space of the MDP. 
		Each state $s$ has exactly two actions $a^{(0)}$ and $a^{(1)}$.
		The support of the transition probability distributions $P(\cdot\mid s, a^{(0)})$ and $P(\cdot\mid s, a^{(1)})$ are the same: they are the outgoing edges of $s$ in the graph $G$. 
		The exact value of these probabilities and the reward function does not matter, since a level 3 teacher has the power to override them. The initial state distribution $\mu_0$ is concentrated on $s_0$.
		We construct a $Q_0$ that favors action $a^{(0)}$ in each state, and the target policy $\pi^\dagger(s) = a^{(1)}$ for each state $s \in \S$. The horizon is $H = D^2$ , where $D$ is the diameter of the graph $G$. The learner is in $\mathcal L$.
		
		\textit{Claim 1}: If an ATSP problem instance $\{G=(V,E), s_0\}$ has a solution $l$, then the level 3 METaL problem instance $(M,L,Q_0,\pi^\dagger)$ has a solution $l$. 
		
		To verify Claim 1, note	the teacher needs to make the learner visit every state exactly once to teach the target action $a^{(1)}$ in that state. This is because initially every state is untaught (by construction $Q_0$ prefers $a^{(0)})$. Further, each state $s$ has exactly two actions and no matter which action the learner takes, the teacher can provide a suitable reward to push the target action $a^{(1)}$ to the top of Q-value ordering.
		If the ATSP problem has a solution $s_{i_0} = s_0 \rightarrow s_{i_1} \rightarrow \cdots s_{i_{l-1}}$, it is possible for the teacher to provide the state transitions $s_{i_0} = s_0 \rightarrow s_{i_1} \rightarrow \cdots s_{i_{l-1}}$ that visits all the states in the least number of time steps and thus teach the target policy optimally. This is because for every edge $s_i \rightarrow s_j$ in the graph, the transition $P(\cdot\mid s_i, a)$ supports $s_j$ for both the actions.
		
		\textit{Claim 2}: If the level 3 METaL problem instance $(M,L,Q_0,\pi^\dagger)$ has a solution $l$, then the ATSP problem instance $\{G=(V,E), s_0\}$  has a solution $l$. 
		
		We prove this by contradiction. Let say the METal problem instance $(M,L,Q_0,\pi^\dagger)$ has a solution $l$. Clearly, all states must have been visited in this optimal teaching length $l$ at least once. So, the corresponding ATSP problem instance must have a solution $\leq l$. But if ATSP has a solution $m<l$, by Claim 1, the METaL problem instance will have a solution $m<l$, thus a contradiction. Hence, the ATSP problem has a solution $l$.
		
		By establishing this reduction, we prove that the METaL problem for a level 3 teacher is at least as hard as ATSP problem which is itself NP-hard.
	\end{proof}
	
	\section{Level 1: Algorithm and Proof}
	\begin{algorithm}[htb]
		\caption{Optimal Level 1 Teaching Algorithm}
		\label{alg:puppet}
		\begin{flushleft}
			\textbf{def Teach($M, L, Q_0, \pi^\dagger$): }
		\end{flushleft}
		\begin{algorithmic}[1]
			\STATE A state $s$ needs to be taught if $Q_0(s,\pi^\dagger(s)) \le \max_{a\neq\pi^\dagger(s)} Q_0(s, a)$.
			Terminate if the MDP has no state to be taught.
			Otherwise arbitrarily order all MDP states that need to be taught as $s^{(0)}, s^{(1)},\cdots, s^{(n)}$ where $0\le n\le S-1$.
			\STATE The teacher provides the state $s_0 \gets s^{(0)}$.
			\FOR{$t = 0,1,\cdots,n$}
			\STATE The agent performs an action according to its current behavior policy $a_{t} \gets \pi_{t}(s_{t})$.
			\STATE The teacher replaces the chosen action with target action $a_{t} \gets $  \pd{s_{t}}.
			\STATE The teacher provides the reward $r_{t}$, and next state $s_{t+1}$
			\STATE \hspace*{.25cm} where $s_{t+1} \gets s^{(\min (t+1, n))}$
			\STATE \hspace*{1.25cm} $r_t : Q_{t+1}(s_{t}, a_t) > \max_{a\neq a_t} Q_{t+1}(s_t, a)$.
			
			\STATE The agent performs an update $Q_{t+1} \gets f(Q_t,e_t)$ using experience $e_t = (s_t, a_t, r_t, s_{t+1})$ 
			
			\ENDFOR
		\end{algorithmic}
	\end{algorithm}

	\begin{proof}[\textbf{Proof of Theorem \ref{thm: lvl 1}}]
		For a level 1 teacher, the worst-case teaching problem instance is the one in which for all states $s \in \S$, the target action $\pi^\dagger(s)$ is not the top action in the $Q_0(s,\cdot)$. In that case, the teacher would need to make the learner visit each state $s$ at least once so that the learner has a chance to learn $\pi^\dagger$ as $s$, 
		i.e. to produce and maintain the eventual condition $Q_T(s,\pi^\dagger(s)) > \max_{a\neq \pi^\dagger(s)}Q_T(s,\cdot)$. 
		Thus, $TDim \geq S$. 
		On the other hand, a level-1 teacher can teach a state in just a single visit to it by replacing the agent chosen action with the target action and rewarding it with a sufficiently high reward (step 8 in the algorithm). 
		Further, at any time step, it can also make the agent transition to an untaught state to teach the target action in that state. Thus, for the worst teaching problem instance, the level-1 teacher can teach the target policy in $S$ steps and hence $TDim=S$.
	\end{proof}
	
	
	\section{Level 2: Algorithm and Proof}

	\begin{algorithm}[H]
		\caption{Optimal Level 2 Teaching Algorithm}
		\label{alg:level2}
		\begin{flushleft}
			\textbf{def Teach($M, L, Q_0, \pi^\dagger$): }
		\end{flushleft}
		\begin{algorithmic}[1]
			\STATE A state $s$ needs to be taught if $Q_0(s,\pi^\dagger(s)) \le \max_{a\neq\pi^\dagger(s)} Q_0(s, a)$.
			Terminate if the MDP has no state to be taught.
			Otherwise arbitrarily order all MDP states that need to be taught as $s^{(0)}, s^{(1)},\cdots, s^{(n)}$ where $0\le n\le S-1$.
			\STATE $t \gets 0, i \gets 0$, the teacher provides initial state $s_0 \gets s^{(0)}$
			\WHILE{$i \le n$}
			\STATE The agent picks a randomized action $a_t \gets \pi_t(s_t)$.
			\IF{$a_t = \pi^\dagger(s_t)$}
			\STATE $s_{t+1} \gets s^{(\min(i+1, n))}$ 
			\STATE $i \gets i+1$ ~~// move on to the next state
			\STATE $r_t : Q_{t+1}(s_{t}, a_t) > \max_{a\neq a_t} Q_{t+1}(s_{t}, a)$ ~~//promote action $\pi^\dagger(s_t)$ to top
			\ELSE
			\IF{ \{a:  $Q_t(s_t, a) \ge Q_t(s_t, \pi^\dagger(s_t))\} = \{a_t, \pi^\dagger(s_t)\} $ }
			\STATE $s_{t+1} \gets s^{(\min(i+1, n))}$ 
			\STATE $i \gets i+1$ ~~// move on to the next state
			\ELSE
			\STATE $s_{t+1} \gets s^{(i)}$ ~~// stay at this state
			\ENDIF
			\STATE $r_t : Q_{t+1}(s_{t}, a_t) < \min_{a \neq a_t}  Q_{t+1}(s_{t}, a)$ ~~// demote action $a_t$ to bottom
			\ENDIF
			\STATE The agent performs an update $Q_{t+1} \gets f(Q_t, e_t)$ with experience $e_t = (s_t, a_t, r_t, s_{t+1})$
			\STATE $t \gets t+1$
			\ENDWHILE
		\end{algorithmic}
	\end{algorithm}
	Remark: Line 10 checks whether $a_t$ is the only no-worse action than $\pi^\dagger(s_t)$: if it is, its demotion also completes teaching at $s_t$.  
	
	%
	%
	
	\begin{proof}[\textbf{Proof of Lemma \ref{thm:expected}}]
		We focus on teaching the target action $\pi^\dagger(s)$ at a particular state $s$. 
		In general let there be $n \in \{1, \ldots, A-1\}$ other actions better than $\pi^\dagger(s)$ in $Q(s,\cdot)$.
		For simplicity, we assume 
		no action is tied with $\pi^\dagger(s)$, namely
		\begin{equation}\label{eqn:q-order}
		Q(s,a_{i_1}) \geq  \cdots \geq Q(s,a_{i_{n}}) > Q(s, a_{i_{n+1}} = \pi^\dagger(s)) >  Q(s, a_{i_{n+2}})\geq \cdots\geq Q(s, a_{i_{A}}).
		\end{equation}	
		Define the upper action set $U := \{ a_{i_1} \cdots a_{i_{n}}\}$ 
		and the lower action set $U := \{ a_{i_{n+2}} \cdots a_{i_{A}}\}$. 
		Define $T(n)$ to be the expected number of visits to $s$ to teach the target action $\pi^\dagger(s)$ at state $s$,  given that initially there are $n$ other actions better than $\pi^\dagger(s)$.
		By ``teach'' we mean move the $n$ actions from $U$ to $L$.
		When the agent visits $s$ it takes a randomized action according to $a_t \gets \pi_t(s)$, which can be any of the $A$ actions.
		We consider three cases:
		\begin{itemize}
			\item[Case 1:] $a_t \in U$, which happens with probability $1-\epsilon+(n-1)\frac{\epsilon}{A-1}$. The teacher provides a reward to demote this action to the bottom of $Q(s,\cdot)$. 
			Therefore, $U$ has one less action after this one teaching step, and recursively needs $T(n-1)$ expected steps in the future.
			
			\item[Case 2:] $a_t = \pi^\dagger(s)$, which happens with probability $\frac{\epsilon}{A-1}$. The teacher provides a reward to promote $a_t$ to the top of $Q(s,\cdot)$ and terminates after this one teaching step (equivalently, $T(0)=0$).
			
			\item[Case 3:] $a_t \in L$, which happens with probability $(A-n-1)\frac{\epsilon}{A-1}$. The teacher can do nothing to promote the target action $\pi^\dagger(s)$ because $a_t$ is
			already below $\pi^\dagger(s)$. Thus, the teacher provides a reward that keeps it that way.
			In the future, it still needs $T(n)$ steps.
		\end{itemize}
		Collecting the 3 cases together we obtain
		\begin{equation}
		T(n) = 1+ \left[ \left(1-\epsilon+(n-1) \frac{\epsilon}{A-1}\right)T(n-1) + \frac{\epsilon}{A-1} T(0) + (A-n-1)\frac{\epsilon}{A-1} T(n) \right].
		\end{equation}
		Rearranging,
		\begin{equation}
		\left(1 - {A-n-1 \over A-1}\epsilon\right)T(n) = 1 + \left(1 - {A-n \over A-1}\epsilon\right) T(n-1).
		\end{equation}
		This can be written as
		\begin{equation}
		\left(1 - {A-1-n \over A-1}\epsilon\right)T(n) = 1 + \left(1 - {A-1-(n-1) \over A-1}\epsilon\right) T(n-1).
		\end{equation}
		This allows us to introduce 
		\begin{equation}
		B(n) := \left(1 - {A-1-n \over A-1}\epsilon\right)T(n)
		\end{equation}
		with the relation
		\begin{equation}
		B(n) = 1 + B(n-1).
		\end{equation}
		Since $T(0)=0$, $B(0)=0$.  Therefore, $B(n)=n$ and
		\begin{equation}
		T(n) = {n \over 1 - {A-1-n \over A-1}\epsilon}.
		\end{equation}
		It is easy to show that the worst case is $n=A-1$, where 
		$T(A-1) = A-1$ regardless of the value of $\epsilon$.  This happens when the target action is originally at the bottom of $Q(s,\cdot)$.
	\end{proof}

	\begin{proof}[\textbf{Proof of Theorem \ref{thm: lvl 2}}]
		We construct a worst-case RL teaching problem instance.  We design $Q_0$ so that for each state $s\in\S$ the target action $\pi^\dagger(s)$ is at the bottom of $Q_0(s, \cdot)$. 
		By Lemma~\ref{thm:expected} the teacher needs to make the agent visits each state $A-1$ times in expectation. Thus a total $S(A-1)$ expected number of steps will be required to teach the target policy to the learner.
	\end{proof}
	
	\section{Level 3 and 4: Algorithm and Proofs}
	\begin{algorithm}[H]
		\caption{The NavTeach Algorithm}
		\label{alg:FAA}
		\begin{flushleft}
			\textbf{def Init($M$): }
		\end{flushleft}
		\begin{algorithmic}[1]
			\STATE $D \leftarrow \infty$. ~~// select the initial state with the shortest tree
			\FOR{$s$ in $\{s \mid \mu_0(s)>0\}$}
			\STATE Construct a minimum depth directed tree $T(s)$ from $s$ to all states in the underlying directed graph of $M$, via breadth first search from $s$. Denote its depth as $D(s)$.
			\IF{$D(s)<D$}
			\STATE depth $D \leftarrow D(s)$; ~ root $s^S \gets s$.
			\ENDIF
			\ENDFOR
			\STATE Let $s^1, \ldots,s^{S-1},s^S$ correspond to a post-order depth-first traversal on the tree $T(s^S)$.
		\end{algorithmic}
		
		\begin{flushleft}
			\textbf{def NavTeach($M, L, Q_0, \pi^\dagger, \delta$): }
		\end{flushleft}
		\begin{algorithmic}[1]
			
			\STATE $t \gets 0, s_t \gets s^S$, ask for randomized agent action $a_t \gets \pi_t(s_t)$
			\FOR{$i = 1, \ldots, S$}
			\STATE // subtask $i$: teach target state $s^i$ with the help of navigation path $p^i$
			\STATE Let $p^i \gets [s_{i_0}=s^S, s_{i_1}, ..., s_{i_d}=s^i]$ be the ancestral path from root $s^S$ to $s^i$ in tree $T(s^S)$
			
			\WHILE {$\argmax_a Q_t(s^i,a) \neq \{\pi^\dagger(s^i)\}$}
			
			\IF {$s_t = s^{i}$}
			\STATE // $s_t=s^i$: the current subtask, establish the target policy.
			\STATE Randomly pick $s_{t+1} \in \{s': P(s' \mid s_t,a_t) >0\}.$
			\IF{$a_t = \pi^\dagger(s_t)$}
			\STATE $r_t \gets \mbox{CarrotStick}(\text{`promote'},a_t,s_t,s_{t+1}, Q_t, \delta)$
			\ELSE 
			\STATE $r_t \gets \mbox{CarrotStick}(\text{`demote'},a_t,s_t,s_{t+1}, Q_t, \delta)$
			\ENDIF
			
			\ELSIF{$s_t \in p^i$}
			\STATE // build navigation if MDP allows
			\IF {$P(p^i.\mbox{next}(s_t) \mid s_t, a_t)>0$}
			\STATE $s_{t+1} \gets p^i.\mbox{next}(s_t)$.
			\STATE $r_t \gets \mbox{CarrotStick}(\text{`promote'},a_t,s_t,s_{t+1},Q_t, \delta)$.
			\ELSE
			\STATE Randomly pick $s_{t+1} \in \{s': P(s' \mid s_t,a_t) >0\}.$
			\STATE $r_t \gets \mbox{CarrotStick}(\text{`demote'},a_t,s_t,s_{t+1},Q_t, \delta)$.
			\ENDIF

			\ELSE
			\STATE // $s_t$ is off subtask $i$ or an already taught state, maintain the $Q(s_t, a_t)$
			\STATE Randomly pick $s_{t+1} \in \{s': P(s' \mid s_t,a_t) >0\}.$
			\STATE $r_t \gets \mbox{CarrotStick}(\text{`maintain'},a_t,s_t,s_{t+1},Q_t, \delta)$.
			\ENDIF
			
			\STATE Give experience $e_t \gets (s_t, a_t, r_t,s_{t+1})$ to the agent.
			\STATE $t \gets t+1$
			\IF{$t \% H = H-1$}
			\STATE $s_{t} \gets s^S$. ~~// episode reset
			\ENDIF
			\STATE Ask for randomized agent action $a_t \gets \pi_t(s_t)$
			\ENDWHILE 
			\ENDFOR		
		\end{algorithmic}
		
		\begin{flushleft}
			\textbf{def CarrotStick($g,a,s,s',Q,\delta$): }
		\end{flushleft}
		\begin{algorithmic}[1]
			\STATE // make $a$ unambiguously the worst action or the best action (with margin  $\delta$)or keep it as it is.
			\IF{$g = $ `promote'}
			\STATE Return $r\geq 0$ such that $Q_{t+1}(s,a) = \underset{b\neq a}{\argmax}\,  Q_{t+1}(s,b)+\delta$ after $Q_{t+1}=f(Q_t, (s, a, r, s'))$.
			\ELSIF{$g = $ `demote'}
			\STATE Return $r \leq 0$ such that $Q_{t+1}(s,a) = \underset{b\neq a}{\argmin}\,  Q_{t+1}(s,b)-\delta$ after $Q_{t+1}=f(Q_t, (s, a, r, s'))$.
			\ELSE
			\STATE Return $r$ such that $Q_{t+1}(s,a) = Q_t(s,a)$ after $Q_{t+1}=f(Q_t, (s, a, r, s'))$.
			\ENDIF
		\end{algorithmic}
		
	\end{algorithm}	
	
	\begin{proof}[\textbf{Proof of Tighter Lower and Upper bound for Level 3 Teacher}] We hereby prove the claimed matching $\Theta\left((S-D)AH(1-\epsilon)^{-D} + H\frac{1-\epsilon}{\epsilon}[(1-\epsilon)^{-D}-1]\right)$ lower and upper bounds for Level 3 Teacher. 
		The key observation is that for an MDP with state space size $S$ and diameter $D$, there must exist $D$ states whose distance to the starting state is $0,1,\ldots,D-1$, respectively. As a result, the total time to travel to these states is at most
		\begin{eqnarray}
		\sum_{d=0}^{D-1} H(\frac{1}{1-\epsilon})^d = H\frac{1-\epsilon}{\epsilon}[(\frac{1}{1-\epsilon})^D-1]
		\end{eqnarray}
		In the lower bound proof of Theorem \ref{thm:l3_lower}, we only count the number of teaching steps required to teach the tail states. Now, if we assume in addition that the neck states also need to be taught, and the target actions are similarly at the bottom of the $Q_0(s,a)$, then it requires precisely an additional $H\frac{1-\epsilon}{\epsilon}[(\frac{1}{1-\epsilon})^D-1]$ steps to teach, which in the end gives a total of
		\begin{equation}
		(S-D-1)(A-1)H(\frac{1}{1-\epsilon})^D + H\frac{1-\epsilon}{\epsilon}[(\frac{1}{1-\epsilon})^D-1]
		\end{equation}
		steps.
		
		In the upper bound proof of Theorem \ref{thm:l3_upper} we upper bound the distance from $s_0$ to any state by $D$. However, based on the observation above, at most $S-D$ states can have distance $D$ from $s_0$, and the rest $D$ states must have distance $0,1,...,D-1$. This allows us to upperbound the total number of teaching steps by
		\begin{equation}
		(2S-1-2D)(A-1)H(\frac{1}{1-\epsilon})^D + H\frac{1-\epsilon}{\epsilon}[(\frac{1}{1-\epsilon})^D-1]
		\end{equation}
		These two bounds matches up to a constant of $2$, and thus gives a matching $\Theta\left((S-D)AH(1-\epsilon)^{-D} + H\frac{1-\epsilon}{\epsilon}[(1-\epsilon)^{-D}-1]\right)$ lower and upper bound.
		Setting $\epsilon = 0$ (requires taking the limit of $\epsilon\rightarrow0$) induces Corollary \ref{thm: cor}.
	\end{proof}

	\begin{figure}[ht]
		\begin{center}
			\includegraphics[width=0.75\textwidth]{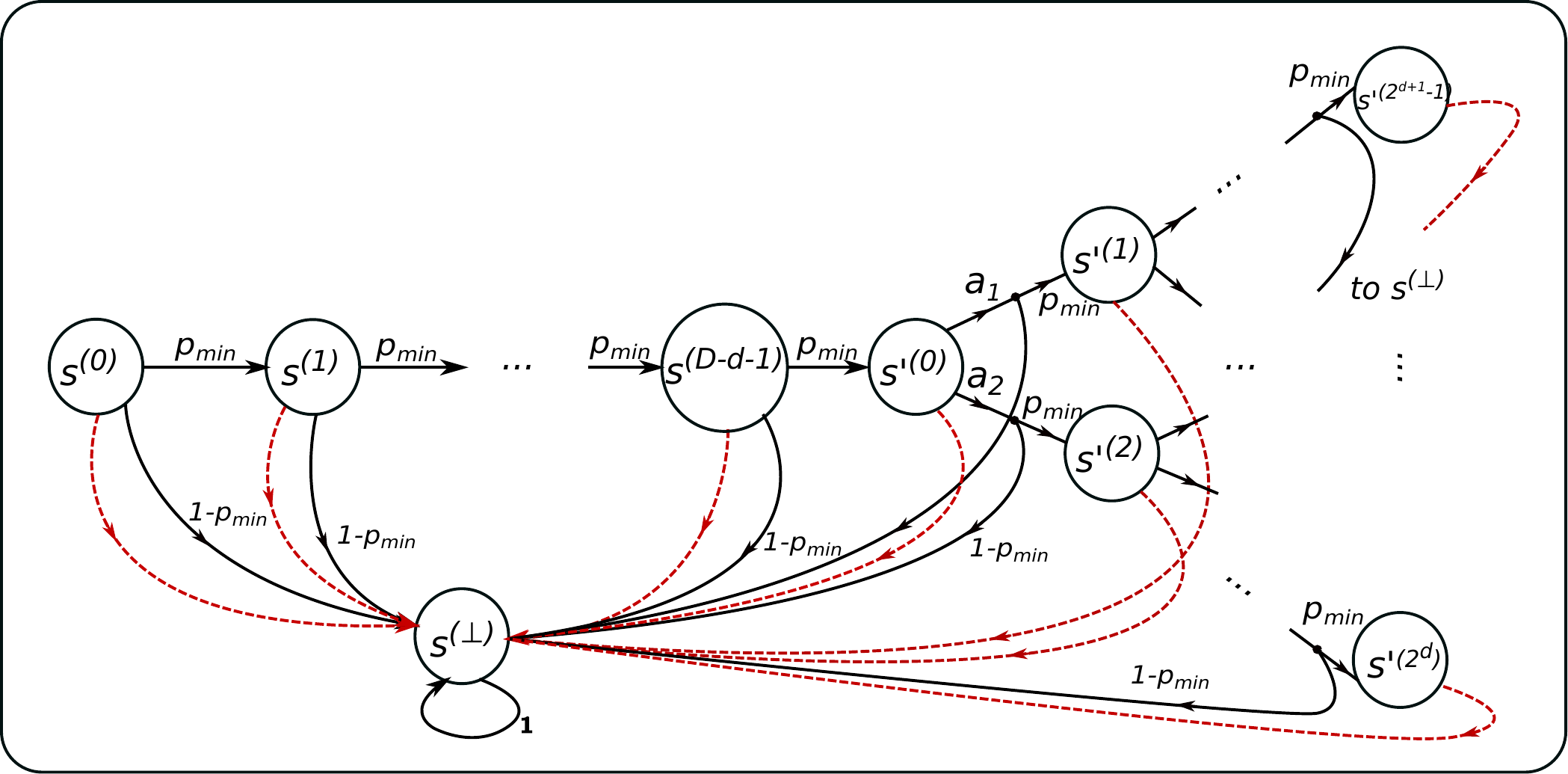}
			\caption{The ``peacock tree" MDP}
			\label{fig:peacock_tree}
		\end{center}
	\end{figure}
	
	\begin{proof}[\textbf{Proof of Theorem \ref{thm: l4_lower}}]
		We construct a hard level 4 teaching problem instance, very similar to ``peacock MDP'' and call it ``peacock tree MDP''. We then show that this MDP admits the given lower bound. The ``peacock tree MDP'' has a linear chain of length $D-d-1$(the ``neck'') and a $d$ depth binary tree(the ``tail'') attached to the end of the neck. For a given $(S,D)$, we can always find $d$ such that $2^{d}+(D-d+1) \leq S \leq 2^{d+1}+(D-d)$. Note that the depth of this MDP is $D$. To simplify the analysis of the proof, from now on, we will assume that the binary tree is complete and full, i.e. $S = 2^{d+1}+(D-d)$.
		
		As in the case of ``peacock MDP'', every state has $A$ actions. The action $a_1$ in the chain transits to next state with probability $p_{min}$ and to the absorbing state $s^{(\perp)}$ with probability $1-p_{min}$. The action $a_1$ in the non-leaf states of the binary tree transits to its top child with probability $p_{min}$ and to $s^{(\perp)}$ with probability $1-p_{min}$, the action $a_2$ there transits to the bottom child with probability $p_{min}$ and to $s^{(\perp)}$ with probability $1-p_{min}$. All other $A-1$ actions in the non-leaf states and the chain states lead to $s^{(\perp)}$ with probability 1. Further, all $A$ actions in the leaf states lead to $s^{(\perp)}$ with probability 1. The target policy is to select $a_1$ at every state. We consider an initial $Q_0$ which favors the target policy at all non-leaf and chain states. For all the leaf states $s$ the target action $a_1$ is $\arg \min_{a} Q_0(s,a)$, namely at the bottom and needs to be taught.
		
		For the lower bound analysis, we consider teaching each leaf state when the traversal path to it is already optimal (Note that in reality, the path has to be taught for each leaf states, but that will eventually add to the lower bound, so we omit it for this analysis). 
		For a leaf state $s$, there exists a path from the root to it.  This requires the teacher to provide the correct transition to the next state along the path, and the learner to choose actions $a_{1}$ all along the chain and then a combination of $a_1$ and $a_2$ actions to reach that leaf $s$. 
		Given that the traversal path to the leaf is already optimal, a successful episode consists of the learner choosing the greedy action at each step and the teacher transitioning the learner to the correct next state on the path to the leaf, which happens with a probability of $(p_{min}(1-\epsilon))^D$. Thus, the expected number of episodes required to make the learner visit the leaf and teach it once there is $(\frac{1}{p_{min}(1-\epsilon)})^D$. Note that in a successful episode, the learner takes $D$ steps to reach the leaf and the rest of the steps in that episode is wasted, thus accounting for a total of $H$ steps. Similarly, any failed episode wastes a total of $H$ steps. Hence, the expected number of steps required to visit and teach a leaf state once is at least $H(\frac{1}{p_{min}(1-\epsilon)})^D$. 
		The teacher has to make the learner visit all $2^{d}$ leaf states $A-1$ times in expectation (since by our construction, the target action of each leaf is at the bottom of the Q-value ordering). Collectively, this would require at least  $2^{d}(A-1)H(\frac{1}{(p_{min}(1-\epsilon)})^D$ steps. We note that, $S = 2^{d+1}+(D-d) \leq 2^{d+1}+D = 2\cdot2^{d} + D \implies 2^{d} \geq \frac{1}{2}(S-D)$. Thus, the expected number of steps to teach the target policy is $\geq \frac{1}{2}(S-D)(A-1)H(\frac{1}{p_{min}(1-\epsilon))^D}) \implies TDim \geq \Omega ((S-D)AH(\frac{1}{p_{min}(1-\epsilon))^D}))$.
	\end{proof}

	
	\begin{proof}[\textbf{Proof of Theorem \ref{thm: l4_upper}}]
		The proof follows similarly to the upper bound proof for the teaching dimension of a level 3 teacher and uses NavTeach algorithm \algref{alg:FAA}. For a given MDP, the teacher first creates a breadth-first tree and then starts teaching the states in a post-order depth-first traversal. Note that the breadth-first tree is still constructed using the transition edges that are supported by the underlying MDP. A level 4 teacher, while transitioning out from a particular state, can only choose a desired transition-edge with a probability $\geq p_{min}$. Thus, the probability that the teacher can make the learner transit from one state to another using a greedy action chosen by the learner is at least $p_{min}(1-\epsilon)$. 
		
		The teaching goal is broken into $S$ subtasks, one for each state. The sub-task for a state further consists of teaching a navigation path to reach that state and then teaching the target action in that state. Because of the post-order depth-first teaching strategy, a large part of the navigation path is shared between two subtasks. Also, this strategy requires a navigation action at each non-leaf state to be taught just once. We further note that in depth-first teaching strategy, a navigation action from a parent state $s_i$ to a child state $s_j$ is taught only after a navigation path to the parent $s_i$ is laid. Similarly, the target action at a state $s_j$ is taught only after a navigation path to it is laid. Thus, the expected number of steps required to reach a state at depth $i$ and teach once there is at most $(\frac{1}{p_{min}(1-\epsilon)})^i$. For a simpler analysis, we assume that once the agent falls off the path leading to the target state, the remaining steps in that episode are wasted. Similarly, once an agent reaches a target state and is taught by the teacher, the remaining episode steps are wasted. 
		Thus, the expected number of steps required to visit a state at depth $i$ and teach the navigation action there is $(A-1)H(\frac{1}{p_{min}(1-\epsilon)})^i \leq (A-1)H(\frac{1}{p_{min}(1-\epsilon)})^D$. Noting the fact that there are at most $S-1$ non-leaf states and the teacher needs to teach the navigation action at each of them exactly once, the expected number of steps required to teach all the navigation actions is at most
		\begin{equation}\label{eqn:navigate_action}
		(S-1)(A-1)H\Big(\frac{1}{p_{min}(1-\epsilon)}\Big)^D.
		\end{equation}  
		Similarly, the expected number of steps required to visit a state at depth $i$ and teach the target action there is $(A-1)H(\frac{1}{p_{min}(1-\epsilon)})^i \leq (A-1)H(\frac{1}{p_{min}(1-\epsilon)})^D$. Adding it up, the expected number of steps required to teach the target action at all states is at most 
		
		\begin{equation}\label{eqn:target_action}
		S(A-1)H\Big(\frac{1}{p_{min}(1-\epsilon)}\Big)^D.
		\end{equation}
		Combining \ref{eqn:navigate_action} and \ref{eqn:target_action}, we conclude that the expected number of steps required to teach the target policy using \algref{alg:FAA} is at most 
		\begin{equation}
		(2S-1)(A-1)H(\frac{1}{p_{min}(1-\epsilon)})^D \implies TDim \leq  O(SAH\Big(\frac{1}{p_{min}(1-\epsilon)}\Big)^D).
		\end{equation} 
	\end{proof}
	
	\paragraph{Remark:} A more careful analysis that leads to a tight lower and upper bound is also possible for the level 4 teacher, but the calculation and the eventual bound one gets become much more complicated, and thus we defer it to future works.
	
	\section{Generalization to SARSA}
	\begin{algorithm}[ht!]
		\caption{Machine Teaching Protocol for SARSA}\label{alg:sarsa_protocol}
		\begin{flushleft}
			\textbf{Entities:} MDP environment, learning agent with initial Q-table $Q_0$, teacher.\\
		\end{flushleft}
		\begin{algorithmic}[1]
			\STATE MDP draws $s_0 \sim \mu_0$ after each episode reset. But the teacher \textbf{may} override $s_0$.
			\FOR{$t = 0,...,H-1$} 
			\STATE The agent picks an action $a_{t} = \pi_{t}(s_{t})$ with its current behavior policy $\pi_{t}$. But the teacher \textbf{may} override $a_t$ with a teacher-chosen action.
			\IF{$t=0$}
			\STATE The agent updates  $Q_{t+1} = Q_t$.
			\ELSE
			\STATE The agent updates $Q_{t+1} = f(Q_{t},e_{t})$ from experience $e_{t} = (s_{t-1}, a_{t-1}, r_{t-1}, s_{t}, a_{t})$.
			\ENDIF
			
			\STATE The MDP evolves from $(s_{t},a_{t})$ to produce immediate reward $r_{t}$ and the next state $s_{t+1}$. But the teacher \textbf{may} override $r_{t}$ or move the system to a different state $s_{t+1}$.
			\ENDFOR
		\end{algorithmic}
	\end{algorithm}
	SARSA is different from standard Q-learning in that its update is delayed by one step. In time step $t$, the agent is updating the $(s_{t-1},a_{t-1})$ entry of the Q table, using experience $e_{t} = (s_{t-1}, a_{t-1}, r_{t-1}, s_{t}, a_{t})$. This delayed update makes the student learn slowly. In particular, we show that it can take twice as many visits to a state to enforce the target action compared to Q-learning.
	\begin{lemma}\label{thm:expected_sarsa}
		For a Level 2 Teacher, any SARSA learner, and an MDP family $\mathcal M$ with action space size $A$, it takes at most $2A-2$ visits in expectation to a state $s$ to teach the desired action $\pi^\dagger(s)$ on $s$.
	\end{lemma}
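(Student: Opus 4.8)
The plan is to mirror the proof of \lemref{thm:expected}, inserting SARSA's \emph{one-step-delayed} update. I would fix a state $s$ and, as there, work in the worst case where $\pi^\dagger(s)$ sits at the bottom of $Q_0(s,\cdot)$, so that $n=A-1$ actions outrank it. Since a Level~2 teacher may generate any next state, I would have it keep the learner at $s$ by setting $s_{t+1}=s$ at every step. The teacher observes the learner's action $a_t$ before committing $r_t$, but — crucially — the SARSA update of the entry $Q(s,a_t)$ does not fire at step $t$: it fires during step $t+1$ through $e_{t+1}=(s_t,a_t,r_t,s_{t+1},a_{t+1})$, so it only lands in $Q_{t+2}$ and also depends on the as-yet-unobserved next action $a_{t+1}$.

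First I would record the SARSA analogue of the controllability hypothesis. Having seen $a_t$, the teacher knows which entry the update at step $t+1$ will touch; a sufficiently negative $r_t$ drives $Q_{t+2}(s,a_t)$ strictly below every other action of $s$ regardless of which $a_{t+1}$ the learner picks, and a sufficiently positive $r_t$ drives it strictly above all others. So the teacher can still ``demote'' or ``promote'' the just-played action; the only change is that the effect becomes visible one Q-table later than under Q-learning. (Exact ``maintain'' is no longer available, since the required $r_t$ would depend on $a_{t+1}$, but it is not needed.)

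The counting then goes as in \lemref{thm:expected}, with this lag. Let $N_t$ be the number of actions ranked above $\pi^\dagger(s)$ in the table $Q_t$ that the learner acts on at step $t$; then $Q_t$ reflects the demotions caused by $a_0,\dots,a_{t-2}$ but \emph{not} the one caused by $a_{t-1}$, which is still in flight. Consequently, after the learner plays an above-target action $a_{t-1}$ and the teacher commits to demote it, at step $t$ the learner still acts on a table in which $a_{t-1}$ is in its old (high) position and may re-select it; only from step $t+1$ on is the demotion reflected, so at most one such re-selection can occur per demoted action. I would then set up the dynamic-programming recursion for $T(n)$ — the expected number of visits to $s$ needed to finish from a configuration with $n$ actions above the target and no effective update in flight — carrying one extra bit for ``an effective demotion is currently in flight.'' It is the recursion of \lemref{thm:expected} with every progress transition $n\to n-1$ preceded by at most one wasted visit; the $\epsilon$-exploration branches are handled as there (an exploratory hit on $\pi^\dagger(s)$ ends teaching at once, a hit on an already-below-target action is demoted harmlessly). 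Solving the recursion yields $T(A-1)\le 2(A-1)=2A-2$, and, as in \lemref{thm:expected}, one checks that $n=A-1$ is the worst case, so the bound holds for every $\epsilon\in[0,1]$.

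The part I expect to be the main obstacle is the bookkeeping that pins the overhead at exactly a factor of two: showing that the stale-table re-selection cannot cascade into a longer delay, and that — as in \lemref{thm:expected} — $\epsilon$-exploration into the lower action set cannot push the worst-case count past $2A-2$. This is why the statement is an inequality rather than an exact value, and it is essentially the only place where SARSA's delayed update genuinely costs something.
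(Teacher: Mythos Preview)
Your proposal is correct and follows essentially the same line as the paper's own proof sketch: both isolate SARSA's one-step-delayed update as the sole source of extra cost, observe that when the agent revisits $s$ immediately the previous reward has not yet altered $Q(s,\cdot)$ so the learner may repeat the same above-target action, and conclude that in the worst case ($\epsilon=0$) each effective demotion is accompanied by at most one wasted visit, doubling the $A-1$ count from \lemref{thm:expected} to $2A-2$. Your write-up is considerably more detailed than the paper's brief sketch (the explicit ``in-flight'' bit in the recursion, the controllability check for SARSA via sufficiently extreme $r_t$, and the handling of the $\epsilon$-exploration branches are all things the paper leaves implicit), but the argument is the same.
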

	\begin{proof}[Proof Sketch:]
		The key in proving Lemma \ref{thm:expected_sarsa} is to see that if the agent visits the same state two times in a row, then the lesson provided by the teacher during the first visit has not been absorbed by the learner, and as a result, during the second visit, the learner will still prefer the same (undesirable) action. This, in the worst case ($\epsilon=0$), will be a completely wasted time step, which implies that the total number of visits required will double compared to Q-learning, giving us $2A-2$.
	\end{proof}
	The wasted time step in Lemma \ref{thm:expected_sarsa} will only occur when the agent visits one state twice in a roll. This can be avoided in Level 1 and 2 teachers as long as $S\geq 2$. Therefore, the teaching dimension for level 1 and 2 teachers will only increase by $1$ due to the delayed update of the learner. For Level 3 and Level 4 teacher, the new Lemma \ref{thm:expected_sarsa} only results in at most 2 times increase in the teaching dimension, which does not change the order of our results. Therefore, Level 3 and Level 4 results still hold for SARSA agents.
	
\end{document}